\newtheorem{lemma}{Lemma}
\def\b{\ensuremath\boldsymbol}
\icmltitlerunning{Factor Analysis, Probabilistic PCA, Variational Inference, and Variational Autoencoder: Tutorial and Survey}
\begin{document}

\AddToShipoutPictureBG*{%
  \AtPageUpperLeft{%
    \setlength\unitlength{1in}%
    \hspace*{\dimexpr0.5\paperwidth\relax}
    \makebox(0,-0.75)[c]{\normalsize {\color{black} To appear as a part of an upcoming textbook on dimensionality reduction and manifold learning.}}
    }}

\twocolumn[
\icmltitle{Factor Analysis, Probabilistic Principal Component Analysis, \\Variational Inference, and Variational Autoencoder: Tutorial and Survey}

\icmlauthor{Benyamin Ghojogh}{bghojogh@uwaterloo.ca}
\icmladdress{Department of Electrical and Computer Engineering, 
\\Machine Learning Laboratory, University of Waterloo, Waterloo, ON, Canada}
\icmlauthor{Ali Ghodsi}{ali.ghodsi@uwaterloo.ca}
\icmladdress{Department of Statistics and Actuarial Science \& David R. Cheriton School of Computer Science, 
\\Data Analytics Laboratory, University of Waterloo, Waterloo, ON, Canada}
\icmlauthor{Fakhri Karray}{karray@uwaterloo.ca}
\icmladdress{Department of Electrical and Computer Engineering, 
\\Centre for Pattern Analysis and Machine Intelligence, University of Waterloo, Waterloo, ON, Canada}
\icmlauthor{Mark Crowley}{mcrowley@uwaterloo.ca}
\icmladdress{Department of Electrical and Computer Engineering, 
\\Machine Learning Laboratory, University of Waterloo, Waterloo, ON, Canada}

\icmlkeywords{Tutorial}

\vskip 0.3in
]

\begin{abstract}
This is a tutorial and survey paper on factor analysis, probabilistic Principal Component Analysis (PCA), variational inference, and Variational Autoencoder (VAE). These methods, which are tightly related, are dimensionality reduction and generative models. They assume that every data point is generated from or caused by a low-dimensional latent factor. By learning the parameters of distribution of latent space, the corresponding low-dimensional factors are found for the sake of dimensionality reduction. For their stochastic and generative behaviour, these models can also be used for generation of new data points in the data space. In this paper, we first start with variational inference where we derive the Evidence Lower Bound (ELBO) and Expectation Maximization (EM) for learning the parameters. Then, we introduce factor analysis, derive its joint and marginal distributions, and work out its EM steps. Probabilistic PCA is then explained, as a special case of factor analysis, and its closed-form solutions are derived. Finally, VAE is explained where the encoder, decoder and sampling from the latent space are introduced. Training VAE using both EM and backpropagation are explained. 
\end{abstract}

\section{Introduction}

Learning models can be divided into discriminative and generative models \cite{ng2002discriminative,bouchard2004tradeoff}. Discriminative models discriminate the classes of data for better separation of classes while the generative models learn a latent space which generates the data points. The methods introduced in this paper are generative models. 

Variational inference is a technique which finds a lower bound on the log-likelihood of data and maximizes this lower bound rather than the log-likelihood in the Maximum Likelihood Estimation (MLE). This lower bound is usually referred to as the Evidence Lower Bound (ELBO). 
Learning the parameters of latent space can be done using Expectation Maximization (EM) \cite{bishop2006pattern}.
Variational Autoencoder (VAE) \cite{kingma2014auto} implements the variational inference in an autoencoder neural network setup where the encoder and decoder model the E-step and M-step of EM, respectively. Although, VAE is usually trained using backprogatation, in practice \cite{rezende2014stochastic,hou2017deep}. 
Variational inference and VAE have had many applications in Bayesian analysis; for example, see the application of variational inference in 3D human motion analysis \cite{sminchisescu2004generative} and the application of VAE in forecasting \cite{walker2016uncertain}.

Factor analysis assumes that every data point is generated from a latent factor/variable where some noise may have been added to data in the data space. Using the EM introduced in variational inference, the ELBO is maximized and the parameters of the latent space are learned iteratively. Probabilistic PCA (PPCA), as a special case of factor analysis, restricts the noise of dimensions to be uncorrelated and assumes the variance of noise to be equal in all dimensions. This restriction makes the solution of PPCA closed-form and simpler. 

In this paper, we explain the theory and details of factor analysis, PPCA, variational inference, and VAE. The remainder of this paper is organized as follows. Section \ref{section_variational_inference}  introduces variational inference. We explain factor analysis and PPCA in Sections \ref{section_factor_analysis} and \ref{section_PPCA}, respectively. VAE is explained in Section \ref{section_VAE}. Finally, Section \ref{section_conclusion} concludes the paper.

\section*{Required Background for the Reader}

This paper assumes that the reader has general knowledge of calculus, probability, linear algebra, and basics of optimization. 

\section{Variational Inference}\label{section_variational_inference}

Consider a dataset $\{\b{x}_i\}_{i=1}^n$.
Assume that every data point $\b{x}_i \in \mathbb{R}^d$ is generated from a latent variable $\b{z}_i \in \mathbb{R}^p$. 
This latent variable has a prior distribution $\mathbb{P}(\b{z}_i)$. 
According to Bayes' rule, we have:
\begin{align}
\mathbb{P}(\b{z}_i\, |\, \b{x}_i) = \frac{\mathbb{P}(\b{x}_i\, |\, \b{z}_i)\, \mathbb{P}(\b{z}_i)}{\mathbb{P}(\b{x}_i)}.
\end{align}
Let $\mathbb{P}(\b{z}_i)$ be an arbitrary distribution denoted by $q(\b{z}_i)$. Suppose the parameter of conditional distribution of $\b{z}_i$ on $\b{x}_i$ is denoted by $\b{\theta}$; hence, $\mathbb{P}(\b{z}_i\, |\, \b{x}_i) = \mathbb{P}(\b{z}_i\, |\, \b{x}_i, \b{\theta})$. 
Therefore, we can say:
\begin{align}\label{equation_inference_Bayes2}
\mathbb{P}(\b{z}_i\, |\, \b{x}_i, \b{\theta}) = \frac{\mathbb{P}(\b{x}_i\, |\, \b{z}_i, \b{\theta})\, \mathbb{P}(\b{z}_i\, |\, \b{\theta})}{\mathbb{P}(\b{x}_i\, |\, \b{\theta})}.
\end{align}

\subsection{Evidence Lower Bound (ELBO)}

Consider the Kullback-Leibler (KL) divergence \cite{kullback1951information} between the prior probability of the latent variable and the posterior of the latent variable:
\begin{align*}
&\text{KL}\big(q(\b{z}_i)\, \|\, \mathbb{P}(\b{z}_i\, |\, \b{x}_i, \b{\theta})\big) \\
&\overset{(a)}{=} \int q(\b{z}_i) \log\big(\frac{q(\b{z}_i)}{\mathbb{P}(\b{z}_i\, |\, \b{x}_i, \b{\theta})}\big) d\b{z}_i \\
&= \int q(\b{z}_i) \big( \log(q(\b{z}_i)) - \log(\mathbb{P}(\b{z}_i\, |\, \b{x}_i, \b{\theta})) \big) d\b{z}_i \\
&\overset{(\ref{equation_inference_Bayes2})}{=} \int q(\b{z}_i) \big( \log(q(\b{z}_i)) - \log(\mathbb{P}(\b{x}_i\,|\,\b{z}_i,\b{\theta})) \\
&~~~~~~~~~~~~~~ - \log(\mathbb{P}(\b{z}_i\,|\,\b{\theta})) + \log(\mathbb{P}(\b{x}_i\,|\,\b{\theta})) \big)\, d\b{z}_i 
\end{align*}
\begin{align*}
&\overset{(b)}{=} \log(\mathbb{P}(\b{x}_i\,|\,\b{\theta})) + \int q(\b{z}_i) \big( \log(q(\b{z}_i)) \\
&~~~~~~~~~~~~~~ - \log(\mathbb{P}(\b{x}_i\,|\,\b{z}_i,\b{\theta})) - \log(\mathbb{P}(\b{z}_i\,|\,\b{\theta})) \big)\, d\b{z}_i \\
&= \log(\mathbb{P}(\b{x}_i\,|\,\b{\theta})) \\
&~~~~~~~~~~~~ + \int q(\b{z}_i) \log(\frac{q(\b{z}_i)}{\mathbb{P}(\b{x}_i\,|\,\b{z}_i,\b{\theta}) \mathbb{P}(\b{z}_i\,|\,\b{\theta})})\, d\b{z}_i \\
&= \log(\mathbb{P}(\b{x}_i\,|\,\b{\theta})) + \int q(\b{z}_i) \log(\frac{q(\b{z}_i)}{\mathbb{P}(\b{x}_i, \b{z}_i\,|\,\b{\theta})})\, d\b{z}_i \\
&= \log(\mathbb{P}(\b{x}_i\,|\,\b{\theta})) + \text{KL}\big(q(\b{z}_i)\, \|\, \mathbb{P}(\b{x}_i, \b{z}_i\, |\, \b{\theta})\big),
\end{align*}
where $(a)$ is for definition of KL divergence and $(b)$ is because $\log(\mathbb{P}(\b{x}_i\,|\,\b{\theta}))$ is independent of $\b{z}_i$ and comes out of integral and $\int d\b{z}_i = 1$. 
Hence:
\begin{equation}
\begin{aligned}
\log(\mathbb{P}(\b{x}_i\,|\,\b{\theta})) = &\,\text{KL}\big(q(\b{z}_i)\, \|\, \mathbb{P}(\b{z}_i\, |\, \b{x}_i, \b{\theta})\big) \\
&- \text{KL}\big(q(\b{z}_i)\, \|\, \mathbb{P}(\b{x}_i, \b{z}_i\, |\, \b{\theta})\big).
\end{aligned}
\end{equation}
We define the \textit{Evidence Lower Bound (ELBO)} as:
\begin{align}\label{ELBO_equation1}
\mathcal{L}(q, \b{\theta}) := - \text{KL}\big(q(\b{z}_i)\, \|\, \mathbb{P}(\b{x}_i, \b{z}_i\, |\, \b{\theta})\big).
\end{align}
So:
\begin{align*}
&\log(\mathbb{P}(\b{x}_i\,|\,\b{\theta})) = \text{KL}\big(q(\b{z}_i)\, \|\, \mathbb{P}(\b{z}_i\, |\, \b{x}_i, \b{\theta})\big) + \mathcal{L}(q, \b{\theta}).
\end{align*}
Therefore:
\begin{align}\label{ELBO_equation2}
&\mathcal{L}(q, \b{\theta}) = \log(\mathbb{P}(\b{x}_i\,|\,\b{\theta})) - \underbrace{\text{KL}\big(q(\b{z}_i)\, \|\, \mathbb{P}(\b{z}_i\, |\, \b{x}_i, \b{\theta})\big)}_{\geq 0}.
\end{align}
As the second term is negative with its minus, the ELBO is a lower bound on the log likelihood of data:
\begin{align}\label{ELBO_lower_bound}
\mathcal{L}(q, \b{\theta}) \leq \log(\mathbb{P}(\b{x}_i\,|\,\b{\theta})).
\end{align}
The likelihood $\mathbb{P}(\b{x}_i\,|\,\b{\theta})$ is also referred to as the \textit{evidence}. 
Note that this lower bound gets tight when:
\begin{align}
&\mathcal{L}(q, \b{\theta}) \approx \log(\mathbb{P}(\b{x}_i\,|\,\b{\theta})) \nonumber \\
&~~~~~~~ \implies 0 \leq \text{KL}\big(q(\b{z}_i)\, \|\, \mathbb{P}(\b{z}_i\, |\, \b{x}_i, \b{\theta})\big) \overset{\text{set}}{=} 0 \nonumber \\
&~~~~~~~ \implies q(\b{z}_i) = \mathbb{P}(\b{z}_i\, |\, \b{x}_i, \b{\theta}). \label{equation_ELBO_tight_condition}
\end{align}
This lower bound is depicted in Fig. \ref{figure_ELBO}.

\begin{figure}[!t]
\centering
\includegraphics[width=2.5in]{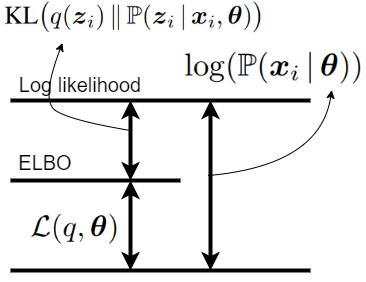}
\caption{Depiction of ELBO as the lower bound on log likelihood. The image is inspired by \cite{bishop2006pattern}.}
\label{figure_ELBO}
\end{figure}

\subsection{Expectation Maximization}

\subsubsection{Background on Expectation Maximization}

This part is taken from our previous tutorial paper \cite{ghojogh2019fitting}.
Sometimes, the data are not fully observable. For example, the data are known to be whether zero or greater than zero. 
In this case, Maximum Likelihood Expectation (MLE) cannot be directly applied as we do not have access to complete information and some data are missing.
In this case, Expectation Maximization (EM) is useful. 
The main idea of EM can be summarized in this short friendly conversation:

\begin{itshape}
-- What shall we do? Some data are missing! The log-likelihood is not known completely so MLE cannot be used.
\newline
-- Hmm, probably we can replace the missing data with something...
\newline
-- Aha! Let us replace it with its mean. 
\newline
-- You are right! We can take the mean of log-likelihood over the possible values of the missing data. Then everything in the log-likelihood will be known, and then...
\newline
-- And then we can do MLE!
\end{itshape}

EM consists of two steps which are the E-step and the M-step. 
In the E-step, the expectation of log-likelihood with respect to the missing data is calculated, in order to have a mean estimation of it. 
In the M-step, the MLE approach is used where the log-likelihood is replaced with its expectation.
These two steps are iteratively repeated until convergence of the estimated parameters.

\subsubsection{Expectation Maximization in Variational Inference}

According to MLE, we want to maximize the log-likelihood of data.
According to Eq. (\ref{ELBO_lower_bound}), maximizing the ELBO will also maximize the log-likelihood. 
The Eq. (\ref{ELBO_lower_bound}) holds for any prior distribution $q$. We want to find the best distribution to maximize the lower bound. Hence, EM for variational inference is performed iteratively as:
\begin{align}
&\text{E-step:} ~~~~~ q^{(t)} := \arg\max_q~~~~ \mathcal{L}(q, \b{\theta}^{(t-1)}), \label{equation_variational_inference_E_step} \\
&\text{M-step:} ~~~~~ \b{\theta}^{(t)} := \arg\max_{\b{\theta}}~~~~ \mathcal{L}(q^{(t)}, \b{\theta}), \label{equation_variational_inference_M_step}
\end{align}
where $t$ denotes the iteration index. 

\textbf{E-step in EM for Variational Inference}:
The E-step is:
\begin{align*}
&\max_q \mathcal{L}(q, \b{\theta}^{(t-1)}) \overset{(\ref{ELBO_equation2})}{=} \max_q \log(\mathbb{P}(\b{x}_i\,|\,\b{\theta}^{(t-1)})) \\
&~~~~~~~~~~~~~ + \max_q \big(\!-\text{KL}\big(q(\b{z}_i)\, \|\, \mathbb{P}(\b{z}_i\, |\, \b{x}_i, \b{\theta}^{(t-1)})\big) \big)\\
&= \max_q \log(\mathbb{P}(\b{x}_i\,|\,\b{\theta}^{(t-1)})) \\
&~~~~~~~~~~~~~ + \min_q \text{KL}\big(q(\b{z}_i)\, \|\, \mathbb{P}(\b{z}_i\, |\, \b{x}_i, \b{\theta}^{(t-1)})\big).
\end{align*}
The second term is always non-negative; hence, its minimum is zero:
\begin{align*}
&\text{KL}\big(q(\b{z}_i)\, \|\, \mathbb{P}(\b{z}_i\, |\, \b{x}_i, \b{\theta}^{(t-1)})\big) \overset{\text{set}}{=} 0 \\
&\implies q(\b{z}_i) = \mathbb{P}(\b{z}_i\, |\, \b{x}_i, \b{\theta}^{(t-1)}),
\end{align*}
which was already found in Eq. (\ref{equation_ELBO_tight_condition}). Thus, the E-step assigns:
\begin{align}
q^{(t)}(\b{z}_i) \gets \mathbb{P}(\b{z}_i\, |\, \b{x}_i, \b{\theta}^{(t-1)}).
\end{align}
In other words, in Fig. \ref{figure_ELBO}, it pushes the middle line toward the above line by maximizing the ELBO.

\textbf{M-step in EM for Variational Inference}:
The M-step is:
\begin{align*}
&\max_{\b{\theta}} \mathcal{L}(q^{(t)}, \b{\theta}) \overset{(\ref{ELBO_equation1})}{=} \max_{\b{\theta}} \big(\! - \text{KL}\big(q^{(t)}(\b{z}_i)\, \|\, \mathbb{P}(\b{x}_i, \b{z}_i\, |\, \b{\theta})\big) \big) \\
&\overset{(a)}{=} \max_{\b{\theta}} \Big[ -\int q^{(t)}(\b{z}_i) \log (\frac{q^{(t)}(\b{z}_i)}{\mathbb{P}(\b{x}_i, \b{z}_i\, |\, \b{\theta})})\, d\b{z}_i \Big] \\
&= \max_{\b{\theta}} \int q^{(t)}(\b{z}_i) \log (\mathbb{P}(\b{x}_i, \b{z}_i\, |\, \b{\theta}))\, d\b{z}_i \\
&~~~~~~~~~~~~~~~~~ - \max_{\b{\theta}} \int q^{(t)}(\b{z}_i) \log (q^{(t)}(\b{z}_i))\, d\b{z}_i,
\end{align*}
where $(a)$ is for definition of KL divergence. The second term is constant w.r.t. $\b{\theta}$. Hence:
\begin{align*}
&\max_{\b{\theta}} \mathcal{L}(q^{(t)}, \b{\theta}) = \max_{\b{\theta}} \int q^{(t)}(\b{z}_i) \log (\mathbb{P}(\b{x}_i, \b{z}_i\, |\, \b{\theta}))\, d\b{z}_i \\
&\overset{(a)}{=} \max_{\b{\theta}} \mathbb{E}_{\sim q^{(t)}(\b{z}_i)} \big[\log \mathbb{P}(\b{x}_i, \b{z}_i\, |\, \b{\theta})\big],
\end{align*}
where $(a)$ is because of definition of expectation. Thus, the M-step assigns:
\begin{align}
\b{\theta}^{(t)} \gets \arg \max_{\b{\theta}}~ \mathbb{E}_{\sim q^{(t)}(\b{z}_i)} \big[\log \mathbb{P}(\b{x}_i, \b{z}_i\, |\, \b{\theta})\big].
\end{align}
In other words, in Fig. \ref{figure_ELBO}, it pushes the above line higher.
The E-step and M-step together somehow play a game where the E-step tries to reach the middle line (or the ELBO) to the log-likelihood and the M-step tries to increase the above line (or the log-likelihood). This procedure is done repeatedly so the two steps help each other improve to higher values. 

To summarize, the EM in variational inference is:
\begin{align}
& q^{(t)}(\b{z}_i) \gets \mathbb{P}(\b{z}_i\, |\, \b{x}_i, \b{\theta}^{(t-1)}), \label{equation_E_step_variationalInference} \\
& \b{\theta}^{(t)} \gets \arg \max_{\b{\theta}}~ \mathbb{E}_{\sim q^{(t)}(\b{z}_i)} \big[\log \mathbb{P}(\b{x}_i, \b{z}_i\, |\, \b{\theta})\big]. \label{equation_M_step_variationalInference}
\end{align}
It is noteworthy that, in variational inference, sometimes, the parameter $\b{\theta}$ is absorbed into the latent variable $\b{z}_i$. According to the chain rule, we have:
\begin{align*}
\mathbb{P}(\b{x}_i, \b{z}_i, \b{\theta}) = \mathbb{P}(\b{x}_i\, |\, \b{z}_i, \b{\theta})\, \mathbb{P}(\b{z}_i\, |\, \b{\theta})\, \mathbb{P}(\b{\theta}).
\end{align*}
Considering the term $\mathbb{P}(\b{z}_i\, |\, \b{\theta})\, \mathbb{P}(\b{\theta})$ as one probability term, we have:
\begin{align*}
\mathbb{P}(\b{x}_i, \b{z}_i) = \mathbb{P}(\b{x}_i\, |\, \b{z}_i)\, \mathbb{P}(\b{z}_i),
\end{align*}
where the parameter $\b{\theta}$ disappears because of absorption. 

\section{Factor Analysis}\label{section_factor_analysis}

\subsection{Background on Marginal Multivariate Gaussian Distribution}

Consider two random variables $\b{x}_i \in \mathbb{R}^d$ and $\b{z}_i \in \mathbb{R}^p$ and let $\b{y}_i := [\b{x}_i^\top, \b{z}_i^\top]^\top \in \mathbb{R}^{d + p}$. Assume that $\b{x}_i$ and $\b{z}_i$ are jointly multivariate Gaussian; hence, the variable $\b{y}_i$ has a multivariate Gaussian distribution, i.e., $\b{y}_i \sim \mathcal{N}(\b{\mu}_y, \b{\Sigma}_y)$.  The mean and covariance can be decomposed as:
\begin{align}
&\b{\mu}_y =  [\b{\mu}^\top, \b{\mu}_0^\top]^\top \in \mathbb{R}^{d + p}, \\
&\b{\Sigma}_y = 
\begin{bmatrix}
\b{\Sigma}_{11} & \b{\Sigma}_{12} \\
\b{\Sigma}_{21} & \b{\Sigma}_{22}
\end{bmatrix}
\in \mathbb{R}^{(d + p) \times (d + p)}, \label{equation_factor_analysis_Sigma_y}
\end{align}
where $\b{\mu} \in \mathbb{R}^d$, $\b{\mu}_0 \in \mathbb{R}^p$, $\b{\Sigma}_{11} \in \mathbb{R}^{d \times d}$, $\b{\Sigma}_{22} \in \mathbb{R}^{p \times p}$, $\b{\Sigma}_{12} \in \mathbb{R}^{d \times p}$, and $\b{\Sigma}_{21} = \b{\Sigma}_{12}^\top \in \mathbb{R}^{p \times d}$. 

It can be shown that the marginal distributions for $\b{x}_i$ and $\b{z}_i$ are Gaussian distributions where $\mathbb{E}[\b{x}_i] = \b{\mu}$ and $\mathbb{E}[\b{z}_i] = \b{\mu}_0$ \cite{ng2018cs229}. 
The covariance matrix of the joint distribution can be simplified as \cite{ng2018cs229}:
\begin{align}
&\b{\Sigma} = 
\mathbb{E}[(\b{y}_i - \b{\mu}_y) (\b{y}_i - \b{\mu}_y)^\top] \nonumber \\
&= \mathbb{E}\Bigg[
\begin{bmatrix}
\b{x}_i - \b{\mu} \\
\b{z}_i - \b{\mu}_0 
\end{bmatrix}
\begin{bmatrix}
\b{x}_i - \b{\mu} \\
\b{z}_i - \b{\mu}_0 
\end{bmatrix}^\top
\Bigg] \nonumber \\
&= \mathbb{E}\Bigg[
\begin{bmatrix}
(\b{x}_i - \b{\mu})(\b{x}_i - \b{\mu})^\top, (\b{x}_i - \b{\mu})(\b{z}_i - \b{\mu}_0)^\top \\
(\b{z}_i - \b{\mu}_0)(\b{x}_i - \b{\mu})^\top, (\b{z}_i - \b{\mu}_0)(\b{z}_i - \b{\mu}_0)^\top
\end{bmatrix}
\Bigg].
\end{align}
This shows that the marginal distributions are:
\begin{align}
&\b{x}_i \sim \mathcal{N}(\b{\mu}, \b{\Sigma}_{11}), \\
&\b{z}_i \sim \mathcal{N}(\b{\mu}_0, \b{\Sigma}_{22}).
\end{align}
According to the definition of the multivariate Gaussian distribution, the conditional distribution is also a Gaussian distribution, i.e., $\b{x}_i | \b{z}_i \sim \mathcal{N}(\b{\mu}_{x|z}, \b{\Sigma}_{x|z})$ where \cite{ng2018cs229}:
\begin{align}
&\mathbb{R}^{d} \ni \b{\mu}_{x|z} := \b{\mu} + \b{\Sigma}_{12}\b{\Sigma}_{22}^{-1} (\b{z}_i - \b{\mu}_0), \\
&\mathbb{R}^{d \times d} \ni \b{\Sigma}_{x|z} := \b{\Sigma}_{11} - \b{\Sigma}_{12}\b{\Sigma}_{22}^{-1} \b{\Sigma}_{21},
\end{align}
and likewise for $\b{z}_i | \b{x}_i \sim \mathcal{N}(\b{\mu}_{z|x}, \b{\Sigma}_{z|x})$:
\begin{align}
&\mathbb{R}^{p} \ni \b{\mu}_{z|x} := \b{\mu}_0 + \b{\Sigma}_{21}\b{\Sigma}_{11}^{-1} (\b{x}_i - \b{\mu}), \label{equation_factor_analysis_z_given_x_mu} \\
&\mathbb{R}^{p \times p} \ni \b{\Sigma}_{z|x} := \b{\Sigma}_{22} - \b{\Sigma}_{21}\b{\Sigma}_{11}^{-1} \b{\Sigma}_{12}. \label{equation_factor_analysis_z_given_x_Sigma} 
\end{align}

\subsection{Main Idea of Factor Analysis}\label{section_factor_analysis_main_idea}

Factor analysis \cite{fruchter1954introduction,cattell1965biometrics,harman1976modern,child1990essentials} is one of the simplest and most fundamental generative models. Although its theoretical derivations are a little complicated but its main idea is very simple. 
Factor analysis assumes that every data point $\b{x}_i \in \mathbb{R}^d$ is generated from a latent variable $\b{z}_i \in \mathbb{R}^p$. 
The latent variable is also referred to as the latent factor; hence, the name of factor analysis comes from the fact that it analyzes the latent factors.

In factor analysis, we assume that the data point $\b{x}_i$ is obtained through the following steps: (1) by linear projection of the $p$-dimensional $\b{z}_i$ onto a $d$-dimensional space by projection matrix $\b{\Lambda} \in \mathbb{R}^{d \times p}$, then (2) applying some linear translation, and finally (3) adding a Gaussian noise $\b{\epsilon} \in \mathbb{R}^d$ with covariance matrix $\b{\Psi} \in \mathbb{R}^{d \times d}$. Note that as the noises in different dimensions are independent, the covariance matrix 
$\b{\Psi}$ is diagonal. 
Factor analysis can be illustrated as a graphical model \cite{ghahramani1996algorithm} where the visible data variable is conditioned on the latent variable and the noise random variable.. Figure \ref{figure_factor_analysis} shows this graphical model.

\begin{figure}[!t]
\centering
\includegraphics[width=1.1in]{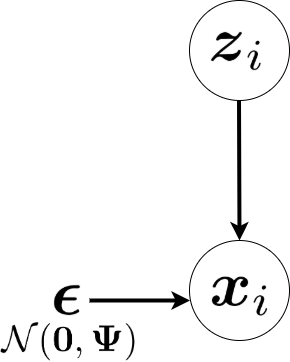}
\caption{The graphical model for factor analysis. The image is inspired by \cite{ghahramani1996algorithm}.}
\label{figure_factor_analysis}
\end{figure}

\subsection{The Factor Analysis Model}

For simplicity, the prior distribution of the latent variable can be assumed to be a multivariate Gaussian distribution:
\begin{align}
&\mathbb{P}(\b{z}_i) = \mathcal{N}(\b{z}_i\, |\, \b{\mu}_0, \b{\Sigma}_0) \nonumber \\
&= \frac{1}{\sqrt{(2\pi)^p |\b{\Sigma}_0|}} \exp\Big(\!\!- \frac{(\b{z}_i - \b{\mu}_0)^\top \b{\Sigma}_0^{-1} (\b{z}_i - \b{\mu}_0)}{2}\Big),
\end{align}
where $\b{\mu}_0 \in \mathbb{R}^p$ and $\b{\Sigma}_0 \in \mathbb{R}^{p \times p}$ are the mean and the covariance matrix of $\b{z}_i$ and $|.|$ is the determinant of matrix.
As was explain in Section \ref{section_factor_analysis_main_idea}, $\b{x}_i$ is obtained through (1) the linear projection of $\b{z}_i$ by $\b{\Lambda} \in \mathbb{R}^{d \times p}$, (2) applying some linear translation, and (3) adding a Gaussian noise $\b{\epsilon} \in \mathbb{R}^d$ with covariance $\b{\Psi} \in \mathbb{R}^{d \times d}$.
Hence, the data point $\b{x}_i$ has a conditional multivariate Gaussian distribution given the latent variable; its conditional likelihood is:
\begin{align}\label{equation_factor_analysis_likelihood_x_given_z}
\mathbb{P}(\b{x}_i\, |\, \b{z}_i) = \mathbb{P}(\b{x}_i\, |\, \b{z}_i, \b{\Lambda}, \b{\mu}, \b{\Psi}) = \mathcal{N}(\b{\Lambda} \b{z}_i + \b{\mu}, \b{\Psi}),
\end{align}
where $\b{\mu}$, which is the translation vector, is the mean of data $\{\b{x}_i\}_{i=1}^n$:
\begin{align}
\mathbb{R}^d \ni \b{\mu} := \frac{1}{n} \sum_{i=1}^n \b{x}_i.
\end{align}

The marginal distribution of $\b{x}_i$ is:
\begin{align}
&\mathbb{P}(\b{x}_i) = \int \mathbb{P}(\b{x}_i\, |\, \b{z}_i)\, \mathbb{P}(\b{z}_i)\, d\b{z}_i \implies \nonumber \\
&\mathbb{P}(\b{x}_i\, |\, \b{\Lambda}, \b{\mu}, \b{\Psi}) \nonumber \\
&~~~~~ = \int \mathbb{P}(\b{x}_i\, |\, \b{z}_i, \b{\Lambda}, \b{\mu}, \b{\Psi})\, \mathbb{P}(\b{z}_i\, |\, \b{\mu}_0, \b{\Sigma}_0)\, d\b{z}_i \nonumber \\
&~~~~~ \overset{(a)}{=} \mathcal{N}(\b{\Lambda} \b{\mu}_0 + \b{\mu}, \b{\Psi} + \b{\Lambda} \b{\Sigma}_0 \b{\Lambda}^\top) \\
&~~~~~ = \mathcal{N}(\widehat{\b{\mu}}, \b{\Psi} + \widehat{\b{\Lambda}} \widehat{\b{\Lambda}}^\top),
\end{align}
where $\mathbb{R}^d \ni \widehat{\b{\mu}} := \b{\Lambda} \b{\mu}_0 + \b{\mu}$, $\mathbb{R}^{d \times d} \ni \widehat{\b{\Lambda}} := \b{\Lambda} \b{\Sigma}_0^{(1/2)}$, and $(a)$ is because mean is linear and variance is quadratic so the mean and variance of projection are applied linearly and quadratically, respectively. 

As the mean $\widehat{\b{\mu}}$ and covariance $\widehat{\b{\Lambda}}$ are needed to be learned, we can absorb $\b{\mu}_0$ and $\b{\Sigma}_0$ into $\b{\mu}$ and $\b{\Lambda}$ and assume that $\b{\mu}_0 = \b{0}$ and $\b{\Sigma}_0 = \b{I}$.

In summary, factor analysis assumes every data point $\b{x}_i \in \mathbb{R}^d$ is obtained by projecting a latent variable $\b{z}_i \in \mathbb{R}^p$ onto a $d$-dimensional space by projection matrix $\b{\Lambda} \in \mathbb{R}^{d \times p}$ and translating it by $\b{\mu} \in \mathbb{R}^d$ and finally adding some Gaussian noise $\b{\epsilon} \in \mathbb{R}^d$ (whose dimensions are independent) as: 
\begin{align}
&\b{x}_i := \b{\Lambda} \b{z}_i + \b{\mu} + \b{\epsilon}, \label{equation_x_z_epsilon} \\
&\mathbb{P}(\b{z}_i) = \mathcal{N}(\b{0}, \b{I}), \label{equation_factor_analysis_prior_z} \\
&\mathbb{P}(\b{\epsilon}) = \mathcal{N}(\b{0}, \b{\Psi}). \label{equation_factor_analysis_prior_epsilon}
\end{align}

\subsection{The Joint and Marginal Distributions in Factor Analysis}

The joint distribution of $\b{x}_i$ and $\b{z}_i$ is:
\begin{align}
\b{y}_i := 
\begin{bmatrix}
\b{x}_i \\
\b{z}_i
\end{bmatrix} 
\sim \mathcal{N}(\b{\mu}_y, \b{\Sigma}_y).
\end{align}
The expectation of $\b{x}_i$ is:
\begin{align}\label{equation_factor_analysis_expectation_x}
\mathbb{E}[\b{x}_i] \overset{(\ref{equation_x_z_epsilon})}{=} \mathbb{E}[\b{\Lambda} \b{z}_i + \b{\mu} + \b{\epsilon}] = \b{\Lambda} \mathbb{E}[\b{z}_i] + \b{\mu} + \mathbb{E}[\b{\epsilon}] \overset{(a)}{=} \b{\mu},
\end{align}
where $(a)$ is because of Eqs. (\ref{equation_factor_analysis_prior_z}) and (\ref{equation_factor_analysis_prior_epsilon}). 
Hence:
\begin{align}
\b{\mu}_y := 
\begin{bmatrix}
\b{\mu}_x \\
\b{\mu}_z
\end{bmatrix} 
\overset{(a)}{=}
\begin{bmatrix}
\b{\mu} \\
\b{0}
\end{bmatrix},
\end{align}
where $(a)$ is because of Eqs. (\ref{equation_factor_analysis_prior_z}) and (\ref{equation_factor_analysis_expectation_x}). 
Consider Eq. (\ref{equation_factor_analysis_Sigma_y}). 
According to Eq. (\ref{equation_factor_analysis_prior_z}), we have $\b{\Sigma}_{22} = \b{\Sigma}_z = \b{I}$. 
According to Eq. (\ref{equation_x_z_epsilon}), we have:
\begin{align}
&\b{\Sigma}_{11} = \b{\Sigma}_x = \mathbb{E}[(\b{x}_i - \b{\mu})(\b{x}_i - \b{\mu})^\top] \nonumber \\
&= \mathbb{E}[(\b{\Lambda} \b{z}_i + \b{\mu} + \b{\epsilon} - \b{\mu})(\b{\Lambda} \b{z}_i + \b{\mu} + \b{\epsilon} - \b{\mu})^\top] \nonumber \\
&= \mathbb{E}[\b{\Lambda}\b{z}_i\b{z}_i^\top\b{\Lambda}^\top + \b{\epsilon}\b{z}_i^\top\b{\Lambda}^\top + \b{\Lambda}\b{z}_i\b{\epsilon}^\top + \b{\epsilon}\b{\epsilon}^\top] \nonumber \\
&= \b{\Lambda}\mathbb{E}[\b{z}_i\b{z}_i^\top]\b{\Lambda}^\top + \mathbb{E}[\b{\epsilon}]\mathbb{E}[\b{z}_i]^\top\b{\Lambda}^\top + \b{\Lambda}\mathbb{E}[\b{z}_i]\mathbb{E}[\b{\epsilon}]^\top + \mathbb{E}[\b{\epsilon}\b{\epsilon}^\top] \nonumber \\
&\overset{(a)}{=} \b{\Lambda}\b{I}\b{\Lambda}^\top + \b{0} + \b{0} + \b{\Psi} = \b{\Lambda}\b{\Lambda}^\top + \b{\Psi},
\end{align}
where $(a)$ is because of Eqs. (\ref{equation_factor_analysis_prior_z}) and (\ref{equation_factor_analysis_prior_epsilon}).
Moreover, we have:
\begin{align}
&\b{\Sigma}_{12} = \b{\Sigma}_{xz} = \mathbb{E}[(\b{x}_i - \b{\mu})(\b{z}_i - \b{\mu}_0)^\top] \nonumber \\
&\overset{(a)}{=} \mathbb{E}[(\b{\Lambda} \b{z}_i + \b{\mu} + \b{\epsilon} - \b{\mu})(\b{z}_i - \b{0})^\top] \nonumber \\
&\overset{(b)}{=} \b{\Lambda}\mathbb{E}[\b{z}_i\b{z}_i^\top] + \mathbb{E}[\b{\epsilon}]\mathbb{E}[\b{z}_i^\top] = \b{\Lambda}\b{I} + (\b{0} \b{0}^\top) = \b{\Lambda},
\end{align}
where $(a)$ is because of Eqs. (\ref{equation_x_z_epsilon}) and (\ref{equation_factor_analysis_prior_z}) and $(b)$ is because $\b{z}_i$ and $\b{\epsilon}$ are independent. We also have $\b{\Sigma}_{21} = \b{\Sigma}_{12}^\top = \b{\Lambda}^\top$. Therefore:
\begin{align}
\begin{bmatrix}
\b{x}_{i} \\
\b{z}_{i}
\end{bmatrix}
\sim
\mathcal{N}\Bigg(
\begin{bmatrix}
\b{\mu} \\
\b{0} 
\end{bmatrix}
,
\begin{bmatrix}
\b{\Lambda}\b{\Lambda}^\top + \b{\Psi} & \b{\Lambda} \\
\b{\Lambda}^\top & \b{I}
\end{bmatrix}
\Bigg).
\end{align}
Hence, the marginal distribution of data point $\b{x}_i$ is:
\begin{align}
&\mathbb{P}(\b{x}_i) = \mathbb{P}(\b{x}_i\, |\, \b{\Lambda}, \b{\mu}, \b{\Psi}) = \mathcal{N}(\b{\mu}, \b{\Lambda}\b{\Lambda}^\top + \b{\Psi}). \label{equation_factor_analysis_prior_of_data}
\end{align}
According to Eqs. (\ref{equation_factor_analysis_z_given_x_mu}) and (\ref{equation_factor_analysis_z_given_x_Sigma}), the posterior or the conditional distribution of latent variable given data is:
\begin{equation}\label{equation_factor_analysis_z_given_x}
\begin{aligned}
q(\b{z}_i) \overset{(\ref{equation_E_step_variationalInference})}{=} \mathbb{P}(\b{z}_i\, |\, \b{x}_i) &= \mathbb{P}(\b{z}_i\, |\, \b{x}_i, \b{\Lambda}, \b{\mu}, \b{\Psi}) \\
&= \mathcal{N}(\b{\mu}_{z|x}, \b{\Sigma}_{z|x}),
\end{aligned}
\end{equation}
where:
\begin{align}
&\mathbb{R}^{p} \ni \b{\mu}_{z|x} := \b{\Lambda}^\top (\b{\Lambda}\b{\Lambda}^\top + \b{\Psi})^{-1} (\b{x}_i - \b{\mu}), \label{equation_factor_analysis_z_given_x_mu_2} \\
&\mathbb{R}^{p \times p} \ni \b{\Sigma}_{z|x} := \b{I} - \b{\Lambda}^\top (\b{\Lambda}\b{\Lambda}^\top + \b{\Psi})^{-1} \b{\Lambda}. \label{equation_factor_analysis_z_given_x_Sigma_2} 
\end{align}
Recall that the conditional distribution of data given the latent variable, i.e. $\mathbb{P}(\b{x}_i\, |\, \b{z}_i)$, was introduced in Eq. (\ref{equation_factor_analysis_likelihood_x_given_z}).

If data $\{\b{x}_i\}_{i=1}^n$ are centered, i.e. $\b{\mu} = \b{0}$, the marginal of data, Eq. (\ref{equation_factor_analysis_prior_of_data}), and the likelihood of data, Eq. (\ref{equation_factor_analysis_likelihood_x_given_z}), become:
\begin{align}
&\mathbb{P}(\b{x}_i\, |\, \b{\Lambda}, \b{\Psi}) = \mathcal{N}(\b{0}, \b{\Psi} + \b{\Lambda}\b{\Lambda}^\top), \label{equation_factor_analysis_prior_of_data_centered} \\
&\mathbb{P}(\b{x}_i\, |\, \b{z}_i, \b{\Lambda}, \b{\Psi}) = \mathcal{N}(\b{\Lambda} \b{z}_i, \b{\Psi}), \label{equation_factor_analysis_likelihood_centered}
\end{align}
respectively. In some works, people center the data as a pre-processing to factor analysis.

\subsection{Expectation Maximization in Factor Analysis}

\subsubsection{Maximization of Joint Likelihood}

In factor analysis, the parameter $\b{\theta}$ of variational inference is the two parameters $\b{\Lambda}$ and $\b{\Psi}$.
As we have in Eq. (\ref{equation_M_step_variationalInference}), consider the maximization of joint likelihood, which reduces to the likelihood of data, over all $n$ data points:
\begin{align*}
& \max_{\b{\Lambda}, \b{\Psi}}~ \sum_{i=1}^n \mathbb{E}_{\sim q^{(t)}(\b{z}_i)} \big[\log \mathbb{P}(\b{x}_i, \b{z}_i\, |\, \b{\Lambda}, \b{\Psi})\big] \\
&\overset{(a)}{=} \max_{\b{\Lambda}, \b{\Psi}}~ \sum_{i=1}^n \Big( \mathbb{E}_{\sim q^{(t)}(\b{z}_i)} \big[\log \mathbb{P}(\b{x}_i\, |\, \b{z}_i, \b{\Lambda}, \b{\Psi})\big] \\
&~~~~~~~~~~~~~~~~~~ + \mathbb{E}_{\sim q^{(t)}(\b{z}_i)} \big[\log \mathbb{P}(\b{z}_i)\big] \Big), \\
&\overset{(b)}{=} \max_{\b{\Lambda}, \b{\Psi}}~ \sum_{i=1}^n \mathbb{E}_{\sim q^{(t)}(\b{z}_i)} \big[\log \mathbb{P}(\b{x}_i\, |\, \b{z}_i, \b{\Lambda}, \b{\Psi})\big]  \\
&\overset{(\ref{equation_factor_analysis_likelihood_x_given_z})}{=} \max_{\b{\Lambda}, \b{\Psi}}~ \sum_{i=1}^n \mathbb{E}_{\sim q^{(t)}(\b{z}_i)} \big[\log \mathcal{N}(\b{\Lambda} \b{z}_i + \b{\mu}, \b{\Psi})\big]  
\end{align*}
\begin{align*}
&= \max_{\b{\Lambda}, \b{\Psi}}~ \sum_{i=1}^n \mathbb{E}_{\sim q^{(t)}(\b{z}_i)} \bigg[ \log\big( \frac{1}{(2\pi)^{p/2} |\b{\Psi}|^{1/2}} \\
&~~~~~\exp\Big(\!\!- \frac{(\b{x}_i - \b{\Lambda} \b{z}_i - \b{\mu})^\top \b{\Psi}^{-1} (\b{x}_i - \b{\Lambda} \b{z}_i - \b{\mu})}{2}\Big) \big) \bigg]
\end{align*}
\begin{align}
&= \max_{\b{\Lambda}, \b{\Psi}}~ \Big( \underbrace{-\frac{d\,n}{2} \log(2\pi)}_\text{constant} - \frac{n}{2} \log |\b{\Psi}| \nonumber \\
&~~~~ - \sum_{i=1}^n \mathbb{E}_{\sim q^{(t)}(\b{z}_i)} \big[\frac{1}{2} (\b{x}_i - \b{\Lambda}\b{z}_i - \b{\mu})^\top \b{\Psi}^{-1} \nonumber \\
&~~~~~~~~~~~~~~~~~~~~~~~~~~~~~~~~~~~~~~~~~ (\b{x}_i - \b{\Lambda}\b{z}_i - \b{\mu}) \big] \Big) \label{equation_factor_analysis_EM_likelihood}
\end{align}
where $(a)$ is because of the chain rule $\mathbb{P}(\b{x}_i, \b{z}_i\, |\, \b{\Lambda}, \b{\Psi}) = \mathbb{P}(\b{x}_i\, |\, \b{z}_i, \b{\Lambda}, \b{\Psi})\, \mathbb{P}(\b{z}_i)$, and $(b)$ is because the second term is zero because of zero mean of prior of $\b{z}_i$ (see Eq. (\ref{equation_factor_analysis_prior_z})).

\subsubsection{The E-Step in EM for Factor Analysis}

As we will see later in the M-step of EM, we will have two expectation terms which need to be computed in the E-step. These expectations, which are over the $q(\b{z}_i)$ distribution, are $\mathbb{E}_{\sim q^{(t)}(\b{z}_i)}[\b{z}_i]$ and $\mathbb{E}_{\sim q^{(t)}(\b{z}_i)}[\b{z}_i \b{z}_i^\top]$.
According to Eq. (\ref{equation_E_step_variationalInference}), we have $q(\b{z}_i) = \mathbb{P}(\b{z}_i\, |\, \b{x}_i)$. Therefore, according to Eqs. (\ref{equation_factor_analysis_z_given_x}), (\ref{equation_factor_analysis_z_given_x_mu_2}), and (\ref{equation_factor_analysis_z_given_x_Sigma_2}), we have:
\begin{align}
&\mathbb{E}_{\sim q^{(t)}(\b{z}_i)}[\b{z}_i] = \b{\mu}_{z|x} := \b{\Lambda}^\top (\b{\Lambda}\b{\Lambda}^\top + \b{\Psi})^{-1} (\b{x}_i - \b{\mu}), \label{equation_factor_analysis_z_given_x_mu_3} \\
&\mathbb{E}_{\sim q^{(t)}(\b{z}_i)}[\b{z}_i \b{z}_i^\top] = \b{\Sigma}_{z|x} := \b{I} - \b{\Lambda}^\top (\b{\Lambda}\b{\Lambda}^\top + \b{\Psi})^{-1} \b{\Lambda}. \label{equation_factor_analysis_z_given_x_Sigma_3} 
\end{align}

\subsubsection{The M-Step in EM for Factor Analysis}

We have two variables $\b{\Lambda}$ and $\b{\Psi}$ so we solve the maximization w.r.t. these variables. 

\textbf{Finding parameter} $\b{\Lambda}$:
\begin{align*}
&\mathbb{R}^{d \times p} \ni \frac{\partial\, \text{Eq.}\, (\ref{equation_factor_analysis_EM_likelihood})}{\partial \b{\Lambda}} \\
&= - \sum_{i=1}^n \frac{\partial}{\partial \b{\Lambda}} \mathbb{E}_{\sim q^{(t)}(\b{z}_i)} \Big[\frac{1}{2} \textbf{tr}(\b{z}_i^\top \b{\Lambda}^\top \b{\Psi}^{-1} \b{\Lambda} \b{z}_i) \\
&~~~~~~~~~~~~~~~~~~~~~~~~~~~~~~~~ - \textbf{tr}(\b{z}_i^\top \b{\Lambda}^\top \b{\Psi}^{-1} (\b{x}_i - \b{\mu})) \Big] 
\end{align*}
\begin{align*}
&\overset{(a)}{=} - \sum_{i=1}^n \frac{\partial}{\partial \b{\Lambda}} \mathbb{E}_{\sim q^{(t)}(\b{z}_i)} \Big[\frac{1}{2} \textbf{tr}(\b{\Lambda}^\top \b{\Psi}^{-1} \b{\Lambda} \b{z}_i \b{z}_i^\top) \\
&~~~~~~~~~~~~~~~~~~~~~~~~~~~~~~~~ - \textbf{tr}(\b{\Lambda}^\top \b{\Psi}^{-1} (\b{x}_i - \b{\mu}) \b{z}_i^\top) \Big] \\
&= - \sum_{i=1}^n \mathbb{E}_{\sim q^{(t)}(\b{z}_i)} \Big[\b{\Psi}^{-1} \b{\Lambda} \b{z}_i \b{z}_i^\top - \b{\Psi}^{-1} (\b{x}_i - \b{\mu}) \b{z}_i^\top \Big] 
\end{align*}
\begin{align*}
&= - \sum_{i=1}^n \Big[\b{\Psi}^{-1} \b{\Lambda} \mathbb{E}_{\sim q^{(t)}(\b{z}_i)}[\b{z}_i \b{z}_i^\top] \\
&~~~~~~~~~~~~~~~~~~~~~~ - \b{\Psi}^{-1} (\b{x}_i - \b{\mu}) \mathbb{E}_{\sim q^{(t)}(\b{z}_i)}[\b{z}_i]^\top \Big], 
\end{align*}
where $(a)$ is because of the cyclic property of trace.
Setting this derivative to zero gives us the optimum $\b{\Lambda}$:
\begin{align}
&\sum_{i=1}^n \b{\Psi}^{-1} \b{\Lambda} \mathbb{E}_{\sim q^{(t)}(\b{z}_i)}[\b{z}_i \b{z}_i^\top] \nonumber \\
&~~~~~~~~~~~~~~~~~ = \sum_{i=1}^n \b{\Psi}^{-1} (\b{x}_i - \b{\mu}) \mathbb{E}_{\sim q^{(t)}(\b{z}_i)}[\b{z}_i]^\top \nonumber \\
&\implies \b{\Lambda} = \Big(\sum_{i=1}^n \b{\Psi}^{-1} (\b{x}_i - \b{\mu}) \mathbb{E}_{\sim q^{(t)}(\b{z}_i)}[\b{z}_i]^\top\Big) \nonumber \\
&~~~~~~~~~~~~~~~~~~~~~~~~~~ \Big( \sum_{i=1}^n \b{\Psi}^{-1} \b{\Lambda} \mathbb{E}_{\sim q^{(t)}(\b{z}_i)}[\b{z}_i \b{z}_i^\top] \Big)^{-1}. \label{equation_factor_analysis_Lambda}
\end{align}

\textbf{Finding parameter} $\b{\Psi}$:
Now, consider maximization w.r.t $\b{\Psi}$.
We restate \text{Eq.}\, (\ref{equation_factor_analysis_EM_likelihood}) as \cite{paola2018lecture}:
\begin{align}\label{equation_factor_analysis_EM_likelihood_with_S}
&\max_{\b{\Lambda}, \b{\Psi}}~ \big( \underbrace{-\frac{d\,n}{2} \log(2\pi)}_\text{constant} - \frac{n}{2} \log |\b{\Psi}| - \frac{n}{2} \textbf{tr}(\b{\Psi}^{-1} \b{S}) \big),
\end{align}
where $\b{S} \in \mathbb{R}^{d \times d}$ is a sample covariance matrix defined as:
\begin{align}
&\b{S}\! := \!\frac{1}{n} \sum_{i=1}^n\! \mathbb{E}_{\sim q^{(t)}(\b{z}_i)}\! \big[ (\b{x}_i - \b{\Lambda}\b{z}_i - \b{\mu})  (\b{x}_i - \b{\Lambda}\b{z}_i - \b{\mu})^\top \big] \nonumber \\
&= \frac{1}{n} \sum_{i=1}^n \Big( (\b{x}_i - \b{\mu})(\b{x}_i - \b{\mu})^\top \nonumber \\
&~~ - 2 \b{\Lambda} \mathbb{E}_{\sim q^{(t)}(\b{z}_i)}\! [\b{z}_i] (\b{x}_i - \b{\mu})^\top + \b{\Lambda} \mathbb{E}_{\sim q^{(t)}(\b{z}_i)}\! [\b{z} \b{z}^\top] \b{\Lambda}^\top \Big). \label{equation_factor_analysis_S}
\end{align}
The maximization results in \cite{paola2018lecture}:
\begin{align*}
&\mathbb{R}^{d \times d} \ni \frac{\partial\, \text{Eq.}\, (\ref{equation_factor_analysis_EM_likelihood_with_S})}{\partial \b{\Psi}^{-1}} = \frac{n}{2} \b{\Psi} - \frac{n}{2} \b{S} \overset{\text{set}}{=} \b{0} \implies \b{\Psi} = \b{S}.
\end{align*}
Note that as the dimensions of noise $\b{\epsilon} \in \mathbb{R}^d$ are independent, the covariance matrix of noise, $\b{\Psi}$, is a diagonal matrix. Hence:
\begin{equation}\label{equation_factor_analysis_Psi}
\begin{aligned}
&\b{\Psi} = \textbf{diag}(\b{S}) \overset{(\ref{equation_factor_analysis_S})}{=} \frac{1}{n} \textbf{diag}\bigg( \sum_{i=1}^n \Big[ (\b{x}_i - \b{\mu})(\b{x}_i - \b{\mu})^\top \\
&- 2 \b{\Lambda} \mathbb{E}_{\sim q^{(t)}(\b{z}_i)}\! [\b{z}_i] (\b{x}_i - \b{\mu})^\top + \b{\Lambda} \mathbb{E}_{\sim q^{(t)}(\b{z}_i)}\! [\b{z} \b{z}^\top] \b{\Lambda}^\top \Big] \bigg).
\end{aligned}
\end{equation}

\subsubsection{Summary of Factor Analysis Algorithm}

According to the derived Eqs. (\ref{equation_factor_analysis_z_given_x_mu_3}), (\ref{equation_factor_analysis_z_given_x_Sigma_3}), (\ref{equation_factor_analysis_Lambda}), and (\ref{equation_factor_analysis_Psi}), the EM algorithm in factor analysis is summarized as follows. The mean of data, $\b{\mu}$, is computed. Then, for every data point $\b{x}_i$, we iteratively solve as:
\begin{align*}
&\mathbb{E}_{\sim q^{(t)}(\b{z}_i)}[\b{z}_i] \gets \b{\Lambda}^{(t)\top} (\b{\Lambda}^{(t)}\b{\Lambda}^{(t)\top} + \b{\Psi}^{(t)})^{-1} (\b{x}_i - \b{\mu}), \\
&\mathbb{E}_{\sim q^{(t)}(\b{z}_i)}[\b{z}_i \b{z}_i^\top] \gets \b{I} - \b{\Lambda}^{(t)\top} (\b{\Lambda}^{(t)}\b{\Lambda}^{(t)\top} + \b{\Psi}^{(t)})^{-1} \b{\Lambda}^{(t)}, \\
&\b{\Lambda}^{(t+1)} \gets \Big(\sum_{i=1}^n (\b{\Psi}^{(t)})^{-1} (\b{x}_i - \b{\mu}) \mathbb{E}_{\sim q^{(t)}(\b{z}_i)}[\b{z}_i]^\top\Big) \nonumber \\
&~~~~~~~~~~~~~~~~~~~~~~~~~~ \Big( \sum_{i=1}^n (\b{\Psi}^{(t)})^{-1} \b{\Lambda}^{(t)} \mathbb{E}_{\sim q^{(t)}(\b{z}_i)}[\b{z}_i \b{z}_i^\top] \Big)^{-1}. \\
&\b{\Psi}^{(t+1)} \gets \frac{1}{n} \textbf{diag}\bigg( \sum_{i=1}^n \Big[ (\b{x}_i - \b{\mu})(\b{x}_i - \b{\mu})^\top \\
&~~~~~~~~~~~~~~~~~~~~~~~~~~ - 2 \b{\Lambda}^{(t+1)} \mathbb{E}_{\sim q^{(t)}(\b{z}_i)}\! [\b{z}_i] (\b{x}_i - \b{\mu})^\top \\
&~~~~~~~~~~~~~~~~~~~~~~~~~~ + \b{\Lambda}^{(t+1)} \mathbb{E}_{\sim q^{(t)}(\b{z}_i)}\! [\b{z} \b{z}^\top] \b{\Lambda}^{(t+1)} \Big] \bigg).
\end{align*}

Note that if data are centered as a pre-processing to factor analysis, i.e. $\b{\mu} = \b{0}$, the algorithm of factor analysis is simplified as:
\begin{align*}
&\mathbb{E}_{\sim q^{(t)}(\b{z}_i)}[\b{z}_i] \gets \b{\Lambda}^{(t)\top} (\b{\Lambda}^{(t)}\b{\Lambda}^{(t)\top} + \b{\Psi}^{(t)})^{-1} \b{x}_i, \\
&\mathbb{E}_{\sim q^{(t)}(\b{z}_i)}[\b{z}_i \b{z}_i^\top] \gets \b{I} - \b{\Lambda}^{(t)\top} (\b{\Lambda}^{(t)}\b{\Lambda}^{(t)\top} + \b{\Psi}^{(t)})^{-1} \b{\Lambda}^{(t)}, \\
&\b{\Lambda}^{(t+1)} \gets \Big(\sum_{i=1}^n (\b{\Psi}^{(t)})^{-1} \b{x}_i\, \mathbb{E}_{\sim q^{(t)}(\b{z}_i)}[\b{z}_i]^\top\Big) \nonumber \\
&~~~~~~~~~~~~~~~~~~~~~~~~~~ \Big( \sum_{i=1}^n (\b{\Psi}^{(t)})^{-1} \b{\Lambda}^{(t)} \mathbb{E}_{\sim q^{(t)}(\b{z}_i)}[\b{z}_i \b{z}_i^\top] \Big)^{-1}. \\
&\b{\Psi}^{(t+1)} \gets \frac{1}{n} \textbf{diag}\bigg( \sum_{i=1}^n \Big[ \b{x}_i \b{x}_i^\top - 2 \b{\Lambda}^{(t+1)} \mathbb{E}_{\sim q^{(t)}(\b{z}_i)}\! [\b{z}_i]\, \b{x}_i^\top \\
&~~~~~~~~~~~~~~~~~~~~~~~~~~ + \b{\Lambda}^{(t+1)} \mathbb{E}_{\sim q^{(t)}(\b{z}_i)}\! [\b{z} \b{z}^\top] \b{\Lambda}^{(t+1)} \Big] \bigg).
\end{align*}
As it can be seen, factor analysis does not have a closed-form solution and its solution, which are the projection matrix $\b{\Lambda}$ and the noise covariance matrix $\b{\Psi}$, are found iteratively until convergence. 

It is noteworthy that mixture of factor analysis \cite{ghahramani1996algorithm} also exists in the literature which considers a mixture distribution for the factor analysis and trains the parameters of mixture using EM \cite{ghojogh2019fitting}. 

\section{Probabilistic Principal Component Analysis}\label{section_PPCA}

\subsection{Main Idea of Probabilistic PCA}

Probabilistic PCA (PPCA) \cite{roweis1997algorithms,tipping1999probabilistic} is a special case of factor analysis where the variance of noise is equal in all dimensions of data space with covariance between dimensions, i.e.: 
\begin{align}\label{equation_PPCA_Psi}
\b{\Psi} = \sigma^2 \b{I}.
\end{align}
In other words, PPCA considers an isotropic noise in its formulation.
Therefore, Eq. (\ref{equation_factor_analysis_prior_epsilon}) is simplified to:
\begin{align}
&\mathbb{P}(\b{\epsilon}) = \mathcal{N}(\b{0}, \sigma^2 \b{I}). \label{equation_PPCA_prior_epsilon}
\end{align}
Because of having zero covariance of noise between different dimensions, PPCA assumes that the data points are independent of each other given latent variables. 
As depicted in Figure \ref{figure_PPCA}, PPCA can be illustrated as a graphical model, where the visible data variable is conditioned on the latent variable and the isotropic noise random variable.

\begin{figure}[!t]
\centering
\includegraphics[width=1.1in]{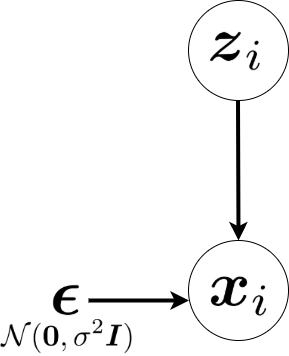}
\caption{The graphical model for PPCA.}
\label{figure_PPCA}
\end{figure}

\subsection{MLE for Probabilistic PCA}

As PPCA is a special case of factor analysis, it also is solved using EM. 
Similar to factor analysis, it can be solved iteratively using EM \cite{roweis1997algorithms}. However, one can also find a closed-form solution to its EM approach \cite{tipping1999probabilistic}. Hence, by restricting the noise covariance to be isotropic, its solution becomes simpler and closed-form. The iterative approach is as we had in factor analysis. Here, we derive the closed-form solution.

Consider the likelihood or the marginal distribution of data points $\{\b{x}_i \in \mathbb{R}^d \}_{i=1}^n$ which is Eq. (\ref{equation_factor_analysis_prior_of_data}). The log-likelihood of data is:
\begin{align*}
&\sum_{i=1}^n \log \mathbb{P}(\b{x}_i) \overset{(\ref{equation_factor_analysis_prior_of_data})}{=} \sum_{i=1}^n \log \mathcal{N}(\b{\mu}, \b{\Lambda}\b{\Lambda}^\top + \sigma^2 \b{I}) \\
&= \sum_{i=1}^n \bigg[ \log\big( \frac{1}{(2\pi)^{d/2} |\b{\Lambda}\b{\Lambda}^\top + \sigma^2 \b{I}|^{1/2}} \times \\
&~~~~~\exp\Big(\!\!- \frac{(\b{x}_i - \b{\mu})^\top (\b{\Lambda}\b{\Lambda}^\top + \sigma^2 \b{I})^{-1} (\b{x}_i - \b{\mu})}{2}\Big) \big) \bigg]
\end{align*}
\begin{align}
&= \underbrace{-\frac{d\,n}{2} \log(2\pi)}_\text{constant} - \frac{n}{2} \log |\b{\Lambda}\b{\Lambda}^\top + \sigma^2 \b{I}| \nonumber \\
&~~~~ - \sum_{i=1}^n \frac{1}{2} (\b{x}_i - \b{\mu})^\top (\b{\Lambda}\b{\Lambda}^\top + \sigma^2 \b{I})^{-1} (\b{x}_i - \b{\mu}) \nonumber \\
&= \underbrace{-\frac{d\,n}{2} \log(2\pi)}_\text{constant} - \frac{n}{2} \log |\b{\Lambda}\b{\Lambda}^\top + \sigma^2 \b{I}| \nonumber \\
&~~~~~~~~~~~~ - \frac{n}{2} \textbf{tr}\big((\b{\Lambda}\b{\Lambda}^\top + \sigma^2 \b{I})^{-1} \b{S}_x\big), \nonumber
\end{align}
where $\b{S}_x \in \mathbb{R}^{d \times d}$ is the sample covariance matrix of data:
\begin{align}
&\b{S}_x := \frac{1}{n} \sum_{i=1}^n (\b{x}_i - \b{\mu})  (\b{x}_i - \b{\mu})^\top.
\end{align}
We use MLE where the variables of maximization optimization are the projection matrix $\b{\Lambda}$ and the noise variance $\sigma$:
\begin{align}
&\max_{\b{\Lambda}, \sigma}~ \Big( \underbrace{-\frac{d\,n}{2} \log(2\pi)}_\text{constant} - \frac{n}{2} \log |\b{\Lambda}\b{\Lambda}^\top + \sigma^2 \b{I}| \nonumber \\
&~~~~~~~~~~~~~~~~~~~~~ - \frac{n}{2} \textbf{tr}\big((\b{\Lambda}\b{\Lambda}^\top + \sigma^2 \b{I})^{-1} \b{S}_x\big) \Big). \label{equation_PPCA_log_likelihood}
\end{align}
It is noteworthy that literature usually defines: 
\begin{align}
&\mathbb{R}^{d \times d} \ni \b{C} := (\b{\Lambda}\b{\Lambda}^\top + \sigma^2 \b{I}_{d \times d}). \\
&\mathbb{R}^{p \times p} \ni \b{M} := (\b{\Lambda}^\top \b{\Lambda} + \sigma^2 \b{I}_{p \times p}).
\end{align}
According to the matrix inversion lemma, we have:
\begin{align}
\b{C}^{-1} = \sigma^{-1} \b{I}_{d \times d} - \sigma^{-2} \b{\Lambda} \b{M}^{-1} \b{\Lambda}^\top.
\end{align}
This inversion is interesting because the inverse of a $(d \times d)$ matrix $\b{C}$ is reduced to inversion of a $(p \times p)$ matrix $\b{M}$ which is much simpler because we usually have $p \ll d$.

\subsubsection{MLE for Determining $\b{\Lambda}$}

Taking the derivative of Eq. (\ref{equation_PPCA_log_likelihood}) w.r.t. $\b{\Lambda} \in \mathbb{R}^{d \times p}$ and setting it to zero is:
\begin{align*}
&\frac{\partial\, \text{Eq.} (\ref{equation_PPCA_log_likelihood})}{\partial \b{\Lambda}} = -n \big( (\b{\Lambda}\b{\Lambda}^\top + \sigma^2 \b{I})^{-1} \b{S}_x (\b{\Lambda}\b{\Lambda}^\top + \sigma^2 \b{I})^{-1} \b{\Lambda} \\
&~~~~~~~~~~~~~~~~~~~~~ - (\b{\Lambda}\b{\Lambda}^\top + \sigma^2 \b{I})^{-1} \b{\Lambda} \big) \overset{\text{set}}{=} \b{0} \\
& \implies (\b{\Lambda}\b{\Lambda}^\top + \sigma^2 \b{I})^{-1} \b{S}_x (\b{\Lambda}\b{\Lambda}^\top + \sigma^2 \b{I})^{-1} \b{\Lambda} \\
&~~~~~~~~~~~~~~~~~~~~~~~~~~~~~~~~~~~~~~~~~~ = (\b{\Lambda}\b{\Lambda}^\top + \sigma^2 \b{I})^{-1} \b{\Lambda} \\
&\implies \b{S}_x (\b{\Lambda}\b{\Lambda}^\top + \sigma^2 \b{I})^{-1} \b{\Lambda} = \b{\Lambda},
\end{align*}
whose trivial solutions are $\b{\Lambda} = \b{0}$ and $\b{S}_x = (\b{\Lambda}\b{\Lambda}^\top + \sigma^2 \b{I})$ which are not valid. 
For the non-trivial solution, consider the Singular Value Decomposition (SVD) $\mathbb{R}^{d \times p} \ni \b{\Lambda} = \b{U} \b{L} \b{V}^\top$ where $\b{U} \in \mathbb{R}^{d \times p}$ and $\b{V} \in \mathbb{R}^{p \times p}$ contain the left and right singular vectors, respectively, and $\b{L} \in \mathbb{R}^{p \times p}$ is the diagonal matrix containing the singular values denoted by $\{l_j\}_{j=1}^p$. 
Moreover, note that $\textbf{tr}(\b{\Lambda}\b{\Lambda}^\top) = \textbf{tr}(\b{U} \b{L} \b{V}^\top \b{V} \b{L} \b{U}^\top) = \textbf{tr}(\b{U} \b{L} \b{L} \b{U}^\top) = \textbf{tr}(\b{U}^\top \b{U} \b{L}^2) = \textbf{tr}(\b{L}^2)$ because $\b{U}$ and $\b{V}$ are orthogonal matrices. 

From the previous calculations, we have \cite{hauskrecht2007cs3750}:
\begin{align}
&\b{S}_x \b{U} \b{L} (\b{L}^2 + \sigma^2 \b{I})^{-1} \b{V}^\top = \b{U} \b{L} \b{V}^\top \nonumber \\
&\implies \b{S}_x \b{U} \b{L} (\b{L}^2 + \sigma^2 \b{I})^{-1} = \b{U} \b{L} \nonumber \\
&\implies \b{S}_x \b{U} \b{L} = \b{U} (\b{L}^2 + \sigma^2 \b{I}) \b{L} \nonumber \\
&\implies \b{S}_x \b{U} = \b{U} (\b{L}^2 + \sigma^2 \b{I}), \label{equation_PPCA_eigen_problem}
\end{align}
which is an eigenvalue problem \cite{ghojogh2019eigenvalue} for the covariance matrix $\b{S}_x$ where the columns of $\b{U}$ are the eigenvectors of $\b{S}_x$ and the eigenvalues are $\sigma^2 + l_j^2$. Recall that $\sigma$ is the variance of noise in different dimensions and $l_j$ is the $j$-th singular value of $\b{\Lambda}$ (sorted from largest to smallest). We denote the $j$-th eigenvalue of the covariance matrix $\b{S}_x$ by:
\begin{align}
\delta_j := \sigma^2 + l_j^2 \implies l_j = (\delta_j - \sigma^2)^{(1/2)}.
\end{align}
We consider only the top $p$ singular values $l_j$ and the top $p$ eigenvalues $\delta_j$; substituting the singular values in the SVD of the projection matrix $\b{\Lambda}$ results in:
\begin{align*}
\b{\Lambda} = \b{U} \b{L} \b{V}^\top = \b{U} (\b{\Delta}_p - \sigma^2 \b{I})^{(1/2)} \b{V}^\top,
\end{align*}
where $\b{\Delta}_p := \textbf{diag}(\delta_1, \dots, \delta_p)$.
However, as Eq. (\ref{equation_PPCA_eigen_problem}) does not include any $\b{V}$, we can replace $\b{V}^\top$ with any arbitrary orthogonal matrix $\b{R} \in \mathbb{R}^{p \times p}$ \cite{tipping1999probabilistic}:
\begin{align}\label{equation_PPCA_Lambda}
\b{\Lambda} = \b{U} (\b{\Delta}_p - \sigma^2 \b{I})^{(1/2)} \b{R}.
\end{align}
The arbitrary orthogonal matrix $\b{R}$ is a rotation matrix which rotates data in projection. It is arbitrary because rotation is not important in dimensionality reduction (the relative distances of embedded data points do not change if all embedded points are rotated). A simple choice for this rotation matrix is $\b{R} = \b{I}$ which results in:
\begin{align}
\b{\Lambda} = \b{U} (\b{\Delta}_p - \sigma^2 \b{I})^{(1/2)}.
\end{align}

\subsubsection{MLE for Determining $\sigma$}

If we substitute Eq. (\ref{equation_PPCA_Lambda}) in the log-likelihood, Eq. (\ref{equation_PPCA_log_likelihood}), the log-likelihood becomes \cite{hauskrecht2007cs3750}:
\begin{align}
&\max_{\sigma}~ -\frac{n}{2} \Big( \underbrace{d \log(2\pi)}_\text{constant} + \sum_{j=1}^p \log (\delta_j) \nonumber \\
&~~~~~~~~~ + \sum_{j=p+1}^d \delta_j + (d-p) \log ((\sigma^2)^{-1}) + p \Big). \label{equation_PPCA_log_likelihood_for_sigma}
\end{align}
Note that we have $d$ eigenvalues $\{\delta_j\}_{j=1}^d$ because the covariance matrix $\b{S}_x$ is a $(d \times d)$ matrix. However, as we have only $p$ singular values $\{l_j\}_{j=1}^p$, the eigenvalues $\{\delta_j\}_{j=p+1}^d$ are very small. 

Taking the derivative of Eq. (\ref{equation_PPCA_log_likelihood_for_sigma}) w.r.t. $\sigma^2$ and setting it to zero is \cite{tipping1999probabilistic}:
\begin{align}
&\frac{\partial\, \text{Eq.} (\ref{equation_PPCA_log_likelihood_for_sigma})}{\partial \sigma^2} = -\frac{n}{2} \big( 0 + \sum_{j=p+1}^d \delta_j + (d-p)\, \sigma^2 + 0 \big) \overset{\text{set}}{=} 0 \nonumber \\
&\implies \sigma^2 = \frac{1}{d-p} \sum_{j=p+1}^d \delta_j.
\end{align}

\subsubsection{Summary of MLE Formulas}

In summary, the MLE estimations for the variables of PPCA are:
\begin{align}
&\sigma^2 = \frac{1}{d-p} \sum_{j=p+1}^d \delta_j, \label{equation_PPCA_sigma_squared} \\
&\b{\Lambda} = \b{U} (\b{\Delta}_p - \sigma^2 \b{I})^{(1/2)} \b{R} = \b{U} (\b{\Delta}_p - \sigma^2 \b{I})^{(1/2)}. \label{equation_PPCA_Lambda_without_R}
\end{align}
Note that Eq. (\ref{equation_PPCA_sigma_squared}) is a measure of the variance lost in the projection by the projection matrix $\b{\Lambda}$. The Eq. (\ref{equation_PPCA_Lambda_without_R}) is the projection or mapping matrix from the latent space to the data space. 

\subsection{Zero Noise Limit: PCA Is a Special Case of Probabilistic PCA}

Recall the posterior which is Eq. (\ref{equation_factor_analysis_z_given_x}). According to Eqs. (\ref{equation_factor_analysis_z_given_x_mu_2}), (\ref{equation_factor_analysis_z_given_x_Sigma_2}), and (\ref{equation_PPCA_Psi}), the posterior in PPCA is:
\begin{equation}\label{equation_PPCA_posterior}
\begin{aligned}
q(\b{z}_i) = \mathbb{P}(\b{z}_i\, |\, \b{x}_i) = \mathcal{N}\Big(&\b{\Lambda}^\top (\b{\Lambda}\b{\Lambda}^\top + \sigma^2 \b{I})^{-1} (\b{x}_i - \b{\mu}), \\
&\b{I} - \b{\Lambda}^\top (\b{\Lambda}\b{\Lambda}^\top + \sigma^2 \b{I})^{-1} \b{\Lambda} \Big).
\end{aligned}
\end{equation}

Consider zero noise limit where the variance of noise goes to zero:
\begin{align}
\lim_{\sigma^2 \rightarrow 0} \mathbb{P}(\b{\epsilon}) = \lim_{\sigma^2 \rightarrow 0} \mathcal{N}(\b{0}, \sigma^2 \b{I}) = \mathcal{N}(\b{0}, \lim_{\sigma^2 \rightarrow 0} (\sigma^2) \b{I}).
\end{align}
In this case, the uncertainty of PPCA almost disappears. In the following, we show that in zero noise limit, PPCA is reduced to PCA \cite{ghojogh2019unsupervised,jolliffe2016principal} and this explains why the PPCA method is a probabilistic approach to PCA. 

In the zero noise limit, the posterior becomes:
\begin{equation}
\begin{aligned}
&\lim_{\sigma^2 \rightarrow 0} q(\b{z}_i) = \lim_{\sigma^2 \rightarrow 0} \mathbb{P}(\b{z}_i\, |\, \b{x}_i) \\
&\overset{(\ref{equation_PPCA_posterior})}{=} \mathcal{N}\Big(\b{\Lambda}^\top (\b{\Lambda}\b{\Lambda}^\top)^{-1} (\b{x}_i - \b{\mu}), \b{I} - \b{\Lambda}^\top (\b{\Lambda}\b{\Lambda}^\top)^{-1} \b{\Lambda} \Big) \\
&\overset{(a)}{=} \mathcal{N}\Big((\b{\Lambda}^\top \b{\Lambda})^{-1} \b{\Lambda}^\top (\b{x}_i - \b{\mu}), \b{I} - (\b{\Lambda}^\top \b{\Lambda})^{-1} \b{\Lambda}^\top \b{\Lambda} \Big). \label{equation_PPCA_zero_noise_limit_posterior}
\end{aligned}
\end{equation}
where $(a)$ is because according to {\citep[footnote 4]{roweis1997algorithms}}, we have:
\begin{align}
\b{\Lambda}^\top (\b{\Lambda}\b{\Lambda}^\top)^{-1} = (\b{\Lambda}^\top \b{\Lambda})^{-1} \b{\Lambda}^\top.
\end{align}

On the other hand, according to {\citep[Appendix C]{tipping1999probabilistic}}, PCA minimizes the reconstruction error as:
\begin{equation}
\begin{aligned}
& \underset{\b{\Lambda}}{\text{minimize}}
& & ||(\b{X} - \b{\mu}) - \b{\Lambda}\b{\Lambda}^\top(\b{X} - \b{\mu})||_F^2,
\end{aligned}
\end{equation}
where $\|.\|_F$ denotes the Frobenius norm of matrix. See \cite{ghojogh2019unsupervised} for more details on minimization of reconstruction error by PCA. 

Instead of minimizing the reconstruction error, one may minimize the reconstruction error for the mean of posterior {\citep[Appendix C]{tipping1999probabilistic}}:
\begin{equation}
\begin{aligned}
& \underset{\b{\Lambda}}{\text{minimize}}
& & ||(\b{X} - \b{\mu}) - \b{\Lambda} (\b{\Lambda}^\top \b{\Lambda})^{-1} \b{\Lambda}^\top (\b{X} - \b{\mu})||_F^2.
\end{aligned}
\end{equation}
This is the minimization of reconstruction error after projection by $(\b{\Lambda}^\top \b{\Lambda})^{-1} \b{\Lambda}^\top$. 
According to the posterior in the zero noise limit (see Eq. (\ref{equation_PPCA_zero_noise_limit_posterior})), this is equivalent to PPCA in the zero noise limit model. Hence, PCA is a deterministic special case of PPCA where the variance of noise goes to zero.

\subsection{Other Variants of Probabilistic PCA}

There exist some other variants of PPCA. We briefly mention them here. 
PPCA has a hyperparameter which is the dimensionality of latent space $p$. 
Bayesian PCA \cite{bishop1999bayesian} models this hyperparameter as another latent random variable which is learned during the EM training. 

According to Eqs. (\ref{equation_x_z_epsilon}), (\ref{equation_factor_analysis_prior_z}), and (\ref{equation_PPCA_prior_epsilon}), note that PPCA uses Gaussian distributions. PPCA with Student-t distribution \cite{zhao2006probabilistic} has been proposed which uses t distributions. This change may improve the embedding of PPCA because of the heavier tails of Student-t distribution compared to Gaussian distribution. This avoids the crowding problem which has motivated the proposal of t-SNE \cite{ghojogh2020stochastic}. 

Sparse PPCA  \cite{guan2009sparse,mattei2016globally} has inserted sparsity into PPCA.
Supervised PPCA \cite{yu2006supervised} makes use of class labels in the formulation of PPCA. 
Mixture of PPCA \cite{tipping1999mixtures} uses mixture of distributions in the formulation PPCA. It trains the parameters of a mixture distribution using EM \cite{ghojogh2019fitting}. 
Generalized PPCA for correlated data is another recent variant of PPCA \cite{gu2020generalized}.

\section{Variational Autoencoder}\label{section_VAE}

Variational Autoencoder (VAE) \cite{kingma2014auto} applies variational inference, i.e., maximizes the ELBO, but in an autoencoder setup and makes it differentiable for the backpropagation training \cite{rumelhart1986learning}. 
As Fig. \ref{figure_VAE} shows, VAE includes an encoder and a decoder, each of which can have several network layers.
A latent space is learned between the encoder and decoder. The latent variable $\b{z}_i$ is sampled from the latent space. In the following, we explain the encoder and decoder parts. The input of encoder in VAE is the data point $\b{x}_i$ and the output of decoder in VAE is its reconstruction $\b{x}_i$.

\begin{figure}[!t]
\centering
\includegraphics[width=2.7in]{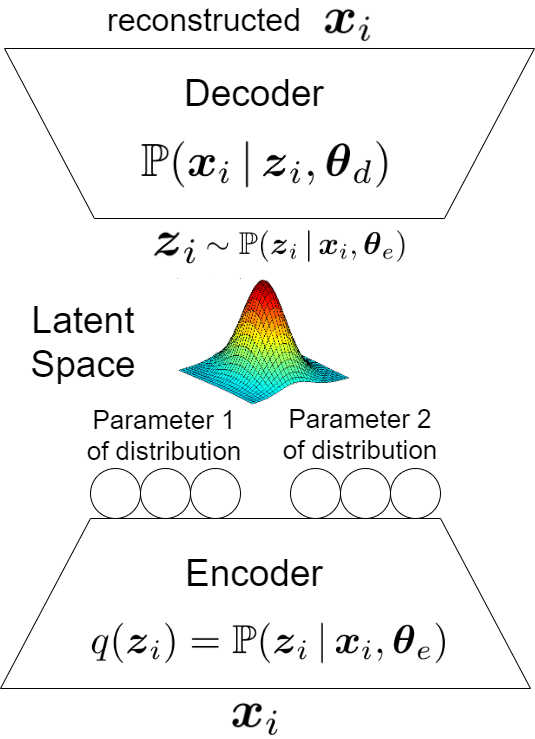}
\caption{The structure of a variational autoencoder.}
\label{figure_VAE}
\end{figure}

\subsection{Parts of Variational Autoencoder}

\subsubsection{Encoder of Variational Autoencoder}

The encoder of VAE models the distribution $q(\b{z}_i) = \mathbb{P}(\b{z}_i\, |\, \b{x}_i, \b{\theta}_e)$ where the parameters of distribution $\b{\theta}_e$ are the weights of encoder layers in VAE.
The input and output of encoder are $\b{x}_i \in \mathbb{R}^d$ and $\b{z}_i \in \mathbb{R}^p$, respectively. 
As Fig. \ref{figure_VAE} depicts, the output neurons of encoder are supposed to determine the parameters of the conditional distribution $\mathbb{P}(\b{z}_i\, |\, \b{x}_i, \b{\theta}_e)$. If this conditional distribution has $m$ number of parameters, we have $m$ sets of output neurons from the encoder, denoted by $\{\b{e}_j\}_{j=1}^m$. The dimensionality of these sets may differ depending on the size of the parameters. 

For example, let the latent space be $p$-dimensional, i.e., $\b{z}_i \in \mathbb{R}^p$. If the distribution $\mathbb{P}(\b{z}_i\, |\, \b{x}_i, \b{\theta}_e)$ is a multivariate Gaussian distribution, we have two sets of output neurons for encoder where one set has $p$ neurons for the mean of this distribution $\b{\mu}_{z|x} = \b{e}_1 \in \mathbb{R}^d$ and the other set has $(p \times p)$ neurons for the covariance of this distribution $\b{\Sigma}_{z|x} = \text{matrix form of } \b{e}_2 \in \mathbb{R}^{p \times p}$. If the covariance matrix is diagonal, the second set has $p$ neurons rather than $(p \times p)$ neurons. In this case, we have $\b{\Sigma}_{z|x} = \textbf{diag}(\b{e}_2) \in \mathbb{R}^{d \times d}$.
Any distribution with any number of parameters can be chosen for $\mathbb{P}(\b{z}_i\, |\, \b{x}_i, \b{\theta}_e)$ but the multivariate Gaussian with diagonal covariance is very well-used:
\begin{align}
q(\b{z}_i) = \mathbb{P}(\b{z}_i\, |\, \b{x}_i, \b{\theta}_e) = \mathcal{N}(\b{z}_i\, |\, \b{\mu}_{z|x}, \b{\Sigma}_{z|x}).
\end{align}

Let the network weights for the output sets of encoder, $\{\b{e}_j\}_{j=1}^m$, be denoted by $\{\b{\theta}_{e,j}\}_{j=1}^m$. As the input of encoder is $\b{x}_i$, the $j$-th output set of encoder can be written as $\b{e}_j(\b{x}_i, \b{\theta}_{e,j})$. In the case of multivariate Gaussian distribution for the latent space, the parameters are $\b{\mu}_{z|x} = \b{e}_1(\b{x}_i, \b{\theta}_{e,1})$ and $\b{\Sigma}_{z|x} = \textbf{diag}(\b{e}_2(\b{x}_i, \b{\theta}_{e,2}))$. 

\subsubsection{Sampling the Latent Variable}

When the data point $\b{x}_i$ is fed as input to the encoder, the parameters of the conditional distribution $q(\b{z}_i)$ are obtained; hence, the distribution of latent space, which is $q(\b{z}_i)$, is determined corresponding to the data point $\b{x}_i$. Now, in the latent space, we sample the corresponding latent variable from the distribution of latent space:
\begin{align}\label{equation_VAE_sampling}
\b{z}_i \sim q(\b{z}_i) = \mathbb{P}(\b{z}_i\, |\, \b{x}_i, \b{\theta}_e).
\end{align}
This latent variable is fed as input to the decoder which is explained in the following. 

\subsubsection{Decoder of Variational Autoencoder}

As Fig. \ref{figure_VAE} shows, the decoder of VAE models the conditional distribution $\mathbb{P}(\b{x}_i\, |\, \b{z}_i, \b{\theta}_d)$ where $\b{\theta}_d$ are the weights of decoder layers in VAE. 
The input and output of decoder are $\b{z}_i \in \mathbb{R}^p$ and $\b{x}_i \in \mathbb{R}^d$, respectively. 
The output neurons of decoder are supposed to either generate the reconstructed data point or determine the parameters of the conditional distribution $\mathbb{P}(\b{x}_i\, |\, \b{z}_i, \b{\theta}_d)$; the former is more common. In the latter case, if this conditional distribution has $l$ number of parameters, we have $l$ sets of output neurons from the decoder, denoted by $\{\b{d}_j\}_{j=1}^l$. The dimensionality of these sets may differ depending the size of every parameters. 
The example of multivariate Gaussian distribution also can be mentioned for the decoder. 
Let the network weights for the output sets of decoder, $\{\b{d}_j\}_{j=1}^l$, be denoted by $\{\b{\theta}_{d,j}\}_{j=1}^l$. As the input of decoder is $\b{z}_i$, the $j$-th output set of decoder can be written as $\b{d}_j(\b{z}_i, \b{\theta}_{d,j})$. 

\subsection{Training Variational Autoencoder with Expectation Maximization}


We use EM for training the VAE. 
Recall Eqs. (\ref{equation_variational_inference_E_step}) and (\ref{equation_variational_inference_M_step}) for EM in variational inference. 
Inspired by that, VAE uses EM for training where the ELBO is a function of encoder weights $\b{\theta}_e$, decoder weights $\b{\theta}_d$, and data point $\b{x}_i$:
\begin{align}
&\text{E-step:} ~~~~~ \b{\theta}_e^{(t)} := \arg\max_q~~~~ \mathcal{L}(\b{\theta}_e, \b{\theta}_d^{(t-1)}, \b{x}_i), \label{equation_VAE_E_step} \\
&\text{M-step:} ~~~~~ \b{\theta}_d^{(t)} := \arg\max_q~~~~ \mathcal{L}(\b{\theta}_e^{(t)}, \b{\theta}_d, \b{x}_i). \label{equation_VAE_M_step}
\end{align}
We can simplify this iterative optimization algorithm by alternating optimization \cite{jain2017non} where we take a step of gradient ascent optimization in every iteration. We consider mini-batch stochastic gradient ascent and take training data in batches where $b$ denotes the mini-batch size. Hence, the optimization is:
\begin{align}
&\text{E-step:} ~~~~~ \b{\theta}_e^{(t)} := \b{\theta}_e^{(t-1)} + \eta_e \frac{\partial \sum_{i=1}^b \mathcal{L}(\b{\theta}_e, \b{\theta}_d^{(t-1)}, \b{x}_i)}{\partial \b{\theta}_e}, \label{equation_VAE_E_step_gradient} \\
&\text{M-step:} ~~~~~ \b{\theta}_d^{(t)} := \b{\theta}_d^{(t-1)} + \eta_d \frac{\partial \sum_{i=1}^b \mathcal{L}(\b{\theta}_e^{(t)}, \b{\theta}_d, \b{x}_i)}{\partial \b{\theta}_d}, \label{equation_VAE_M_step_gradient}
\end{align}
where $\eta_e$ and $\eta_d$ are the learning rates for $\b{\theta}_e$ and $\b{\theta}_d$, respectively. 

The ELBO is simplified as:
\begin{align}
&\sum_{i=1}^b \mathcal{L}(q, \b{\theta}) \overset{(\ref{ELBO_equation1})}{=} - \sum_{i=1}^b \text{KL}\big(q(\b{z}_i)\, \|\, \mathbb{P}(\b{x}_i, \b{z}_i\, |\, \b{\theta}_d)\big) \nonumber \\
&~~~~~~~ \overset{(\ref{equation_E_step_variationalInference})}{=} - \sum_{i=1}^b \text{KL}\big(\mathbb{P}(\b{z}_i\, |\, \b{x}_i, \b{\theta}_e)\, \|\, \mathbb{P}(\b{x}_i, \b{z}_i\, |\, \b{\theta}_d)\big). \label{equation_VAE_ELBO_KL_divergence}
\end{align}
Note that the parameter of $\mathbb{P}(\b{x}_i, \b{z}_i\, |\, \b{\theta}_d)$ is $\b{\theta}_d$ because $\b{z}_i$ is generated after the encoder and before the decoder. 

There are different ways for approximating the KL divergence in Eq. (\ref{equation_VAE_ELBO_KL_divergence}) \cite{hershey2007approximating,duchi2007derivations}.
We can simplify the ELBO in at least two different ways which are explained in the following.


\subsubsection{Simplification Type 1}

We continue the simplification of ELBO:
\begin{align}
&\sum_{i=1}^b \mathcal{L}(q, \b{\theta}) = - \sum_{i=1}^b \text{KL}\big(\mathbb{P}(\b{z}_i\, |\, \b{x}_i, \b{\theta}_e)\, \|\, \mathbb{P}(\b{x}_i, \b{z}_i\, |\, \b{\theta}_d)\big) \nonumber \\
&= - \sum_{i=1}^b \mathbb{E}_{\sim q^{(t-1)}(\b{z}_i)} \Big[\log \big(\frac{\mathbb{P}(\b{z}_i\, |\, \b{x}_i, \b{\theta}_e)}{\mathbb{P}(\b{x}_i, \b{z}_i\, |\,  \b{\theta}_d)}\big)\Big] \nonumber \\
&= - \sum_{i=1}^b \mathbb{E}_{\sim \mathbb{P}(\b{z}_i\, |\, \b{x}_i, \b{\theta}_e)} \Big[\log \big(\frac{\mathbb{P}(\b{z}_i\, |\, \b{x}_i, \b{\theta}_e)}{\mathbb{P}(\b{x}_i, \b{z}_i\, |\,  \b{\theta}_d)}\big)\Big]. \label{equation_VAE_simplification1_ELBO_withExpectation}
\end{align}
This expectation can be approximated using Monte Carlo approximation \cite{ghojogh2020sampling} where we draw $\ell$ samples $\{\b{z}_{i,j}\}_{j=1}^\ell$, corresponding to the $i$-th data point, from the conditional distribution distribution as:
\begin{align}
\b{z}_{i,j} \sim \mathbb{P}(\b{z}_i\, |\, \b{x}_i, \b{\theta}_e), \quad \forall j \in \{1,\dots,\ell\}.
\end{align}
Monte Carlo approximation \cite{ghojogh2020sampling}, in general, approximates expectation as:
\begin{align}
\mathbb{E}_{\sim \mathbb{P}(\b{z}_i\, |\, \b{x}_i, \b{\theta}_e)} \big[ f(\b{z}_i) \big] \approx \frac{1}{\ell} \sum_{j=1}^\ell f(\b{z}_{i,j}),
\end{align}
where $f(\b{z}_i)$ is a function of $\b{z}_i$.
Here, the approximation is:
\begin{align}
&\sum_{i=1}^b \mathcal{L}(q, \b{\theta}) \approx \sum_{i=1}^b \widetilde{\mathcal{L}}(q, \b{\theta}) \nonumber \\
&= - \sum_{i=1}^b \frac{1}{\ell} \sum_{j=1}^\ell \log \big(\frac{\mathbb{P}(\b{z}_{i,j}\, |\, \b{x}_i, \b{\theta}_e)}{\mathbb{P}(\b{x}_i, \b{z}_{i,j}\, |\,  \b{\theta}_d)}\big) \nonumber \\
&= \sum_{i=1}^b \frac{1}{\ell} \sum_{j=1}^\ell \! \Big[ \log \big( \mathbb{P}(\b{x}_i, \b{z}_{i,j}\, |\,  \b{\theta}_d) \big) \nonumber \\
&~~~~~~~~~~~~~~~~~~~~~~~~~~~~~~~~~~~ - \log \big( \mathbb{P}(\b{z}_{i,j}\, |\, \b{x}_i, \b{\theta}_e) \big) \Big].
\end{align}


\subsubsection{Simplification Type 2}

We can simplify the ELBO using another approach:
\begin{align}
&\sum_{i=1}^b \mathcal{L}(q, \b{\theta}) = - \sum_{i=1}^b \text{KL}\big(\mathbb{P}(\b{z}_i\, |\, \b{x}_i, \b{\theta}_e)\, \|\, \mathbb{P}(\b{x}_i, \b{z}_i\, |\, \b{\theta}_d)\big) \nonumber \\
&= - \sum_{i=1}^b \int \mathbb{P}(\b{z}_i\, |\, \b{x}_i, \b{\theta}_e) \log \big(\frac{\mathbb{P}(\b{z}_i\, |\, \b{x}_i, \b{\theta}_e)}{\mathbb{P}(\b{x}_i, \b{z}_i\, |\,  \b{\theta}_d)}\big)\, d\b{z}_i \nonumber \\
&= - \sum_{i=1}^b \int \mathbb{P}(\b{z}_i\, |\, \b{x}_i, \b{\theta}_e) \log \big(\frac{\mathbb{P}(\b{z}_i\, |\, \b{x}_i, \b{\theta}_e)}{\mathbb{P}(\b{x}_i\, |\, \b{z}_i, \b{\theta}_d)\, \mathbb{P}(\b{z}_i)}\big)\, d\b{z}_i \nonumber 
\end{align}
\begin{align}
&= - \sum_{i=1}^b \int \mathbb{P}(\b{z}_i\, |\, \b{x}_i, \b{\theta}_e) \log \big(\frac{\mathbb{P}(\b{z}_i\, |\, \b{x}_i, \b{\theta}_e)}{\mathbb{P}(\b{z}_i)}\big)\, d\b{z}_i \nonumber \\
&~~~~~ + \sum_{i=1}^b \int \mathbb{P}(\b{z}_i\, |\, \b{x}_i, \b{\theta}_e) \log\big(\mathbb{P}(\b{x}_i\, |\, \b{z}_i, \b{\theta}_d)\big)\, d\b{z}_i \nonumber \\
&= - \sum_{i=1}^b \text{KL}\big( \mathbb{P}(\b{z}_i\, |\, \b{x}_i, \b{\theta}_e)\, \|\, \mathbb{P}(\b{z}_i) \big) \nonumber \\
&~~~~~ + \sum_{i=1}^b \mathbb{E}_{\sim \mathbb{P}(\b{z}_i\, |\, \b{x}_i, \b{\theta}_e)} \Big[ \log\big(\mathbb{P}(\b{x}_i\, |\, \b{z}_i, \b{\theta}_d)\big) \Big]. \label{equation_VAE_simplification2_ELBO_withExpectation}
\end{align}
The second term in the above equation can be estimated using Monte Carlo approximation \cite{ghojogh2020sampling} where we draw $\ell$ samples $\{\b{z}_{i,j}\}_{j=1}^\ell$ from $\mathbb{P}(\b{z}_i\, |\, \b{x}_i, \b{\theta}_e)$:
\begin{align}
&\sum_{i=1}^b \mathcal{L}(q, \b{\theta}) \approx \sum_{i=1}^b \widetilde{\mathcal{L}}(q, \b{\theta}) \nonumber \\
&~~~~~~~~~~~~~ = - \sum_{i=1}^b \text{KL}\big( \mathbb{P}(\b{z}_i\, |\, \b{x}_i, \b{\theta}_e)\, \|\, \mathbb{P}(\b{z}_i) \big) \nonumber \\
&~~~~~~~~~~~~~~~~~ + \sum_{i=1}^b \frac{1}{\ell} \sum_{j=1}^{\ell} \log\big(\mathbb{P}(\b{x}_i\, |\, \b{z}_{i,j}, \b{\theta}_d)\big). \label{equation_VAE_simplification2}
\end{align}

The first term in the above equation can be converted to expectation and then computed using Monte Monte Carlo approximation \cite{ghojogh2020sampling} again, where we draw $\ell$ samples $\{\b{z}_{i,j}\}_{j=1}^\ell$ from $\mathbb{P}(\b{z}_i\, |\, \b{x}_i, \b{\theta}_e)$:
\begin{align}
&\sum_{i=1}^b \mathcal{L}(q, \b{\theta}) \approx \sum_{i=1}^b \widetilde{\mathcal{L}}(q, \b{\theta}) \nonumber \\
&~~~ = - \sum_{i=1}^b \mathbb{E}_{\sim \mathbb{P}(\b{z}_i\, |\, \b{x}_i, \b{\theta}_e)} \Big[ \log \big(\frac{\mathbb{P}(\b{z}_i\, |\, \b{x}_i, \b{\theta}_e)}{\mathbb{P}(\b{z}_i)} \big) \Big] \nonumber \\
&~~~~~~~ + \sum_{i=1}^b \frac{1}{\ell} \sum_{j=1}^{\ell} \log\big(\mathbb{P}(\b{x}_i\, |\, \b{z}_{i,j}, \b{\theta}_d)\big) \nonumber \\
&~~~ \approx - \sum_{i=1}^b \frac{1}{\ell} \sum_{j=1}^{\ell} \log \big( \mathbb{P}(\b{z}_{i,j}\, |\, \b{x}_i, \b{\theta}_e) \big) - \log \big( \mathbb{P}(\b{z}_{i,j}) \big) \nonumber \\
&~~~~~~~ + \sum_{i=1}^b \frac{1}{\ell} \sum_{j=1}^{\ell} \log\big(\mathbb{P}(\b{x}_i\, |\, \b{z}_{i,j}, \b{\theta}_d)\big).
\end{align}

In case we have some families of distributions, such as Gaussian distributions, for $\mathbb{P}(\b{z}_{i,j}\, |\, \b{x}_i, \b{\theta}_e)$ and $\mathbb{P}(\b{z}_{i,j})$, the first term in Eq. (\ref{equation_VAE_simplification2}) can be computed analytically. In the following, we simply Eq. (\ref{equation_VAE_simplification2}) further for Gaussian distributions. 


\subsubsection{Simplification Type 2 for Special Case of Gaussian Distributions}

We can compute the KL divergence in the first term of Eq. (\ref{equation_VAE_simplification2}) analytically for univariate or multivariate Gaussian distributions. For this, we need two following lemmas. 

\begin{lemma}\label{lemma_KL_univariate_Gaussian}
The KL divergence between two univariate Gaussian distributions $p_1 \sim \mathcal{N}(\mu_1, \sigma_1^2)$ and $p_2 \sim \mathcal{N}(\mu_2, \sigma_2^2)$ is:
\begin{align}
\text{KL}(p_1 \| p_2) = \log(\frac{\sigma_2}{\sigma_1}) + \frac{\sigma_1^2 + (\mu_1 - \mu_2)^2}{2 \sigma_2^2} - \frac{1}{2}.
\end{align}
\end{lemma}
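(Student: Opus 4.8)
The plan is to expand the definition of the KL divergence directly and reduce everything to the first two moments of $p_1$. Writing $\text{KL}(p_1 \| p_2) = \mathbb{E}_{p_1}[\log p_1(x) - \log p_2(x)]$ and substituting the densities $p_k(x) = (2\pi\sigma_k^2)^{-1/2}\exp(-(x-\mu_k)^2/(2\sigma_k^2))$, the log-ratio simplifies to $\log(\sigma_2/\sigma_1) - (x-\mu_1)^2/(2\sigma_1^2) + (x-\mu_2)^2/(2\sigma_2^2)$, since the $\log(2\pi)$ contributions cancel. So the problem reduces to taking the expectation of this expression over $x \sim p_1$, which is a matter of plugging in a handful of elementary Gaussian moments.

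First I would dispatch the easy pieces: the constant $\log(\sigma_2/\sigma_1)$ passes through the expectation unchanged, and $\mathbb{E}_{p_1}[(x-\mu_1)^2] = \sigma_1^2$ by the definition of the variance of $p_1$, so the middle term contributes exactly $-\tfrac{1}{2}$. The only term requiring a small manipulation is $\mathbb{E}_{p_1}[(x-\mu_2)^2]$: I would use the mean-shift identity $x - \mu_2 = (x-\mu_1) + (\mu_1-\mu_2)$ and expand the square, giving $\mathbb{E}_{p_1}[(x-\mu_2)^2] = \mathbb{E}_{p_1}[(x-\mu_1)^2] + 2(\mu_1-\mu_2)\,\mathbb{E}_{p_1}[x-\mu_1] + (\mu_1-\mu_2)^2 = \sigma_1^2 + (\mu_1-\mu_2)^2$, where the cross term vanishes because $\mathbb{E}_{p_1}[x] = \mu_1$. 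Dividing by $2\sigma_2^2$ yields the term $(\sigma_1^2 + (\mu_1-\mu_2)^2)/(2\sigma_2^2)$, and collecting the three contributions produces the claimed formula $\log(\sigma_2/\sigma_1) + (\sigma_1^2 + (\mu_1-\mu_2)^2)/(2\sigma_2^2) - \tfrac{1}{2}$.

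There is no genuine obstacle here; the computation is routine ``complete the square and take moments'' bookkeeping. If anything deserves to be stated carefully it is the mean-shift step, which is what converts the cross moment $\mathbb{E}_{p_1}[(x-\mu_2)^2]$ into the known variance plus the squared gap of means. It is worth setting this argument up cleanly, since the anticipated multivariate companion lemma would be proved along exactly the same lines, with $(x-\mu)^2/\sigma^2$ replaced by the quadratic form $(\b{x}-\b{\mu})^\top \b{\Sigma}^{-1}(\b{x}-\b{\mu})$ and the scalar variance identity replaced by $\mathbb{E}[(\b{x}-\b{\mu}_1)(\b{x}-\b{\mu}_1)^\top] = \b{\Sigma}_1$ together with the cyclic property of the trace.
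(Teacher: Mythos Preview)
Your proof is correct and follows essentially the same direct-computation approach as the paper: expand the Gaussian log densities and reduce the KL integral to first and second moments under $p_1$. Your organization is slightly cleaner---by forming the log-ratio $\log p_1 - \log p_2$ first the $\log(2\pi)$ constants cancel immediately, and your mean-shift identity $x-\mu_2 = (x-\mu_1)+(\mu_1-\mu_2)$ for $\mathbb{E}_{p_1}[(x-\mu_2)^2]$ avoids the paper's detour through $\mathbb{E}[x^2]=\sigma_1^2+\mu_1^2$---but the substance is the same.
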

\begin{proof}
See Appendix \ref{section_appendix_A} for proof.
\end{proof}

\begin{lemma}\label{lemma_KL_multivariate_Gaussian}
The KL divergence between two multivariate Gaussian distributions $p_1 \sim \mathcal{N}(\b{\mu}_1, \b{\Sigma}_1)$ and $p_2 \sim \mathcal{N}(\b{\mu}_2, \b{\Sigma}_2)$ with dimensionality $p$ is:
\begin{align}
\text{KL}(p_1 \| p_2) &= \frac{1}{2} \Big( \log(\frac{|\b{\Sigma}_2|}{|\b{\Sigma}_1|}) - p + \textbf{tr}(\b{\Sigma}_2^{-1} \b{\Sigma}_1) \nonumber \\
&~~~~~ + (\b{\mu}_2 - \b{\mu}_1)^\top \b{\Sigma}_2^{-1} (\b{\mu}_2 - \b{\mu}_1) \Big).
\end{align}
\end{lemma}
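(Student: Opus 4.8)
The plan is to write the KL divergence as an expectation under $p_1$ and expand the log-densities explicitly. Starting from the definition, $\text{KL}(p_1\|p_2)=\mathbb{E}_{\b{x}\sim p_1}[\log p_1(\b{x})-\log p_2(\b{x})]$, I would substitute the standard multivariate Gaussian density, so that for $k\in\{1,2\}$, $\log p_k(\b{x})=-\tfrac{p}{2}\log(2\pi)-\tfrac12\log|\b{\Sigma}_k|-\tfrac12(\b{x}-\b{\mu}_k)^\top\b{\Sigma}_k^{-1}(\b{x}-\b{\mu}_k)$. The $(2\pi)$-terms cancel in the difference, leaving $\log p_1(\b{x})-\log p_2(\b{x})=\tfrac12\log\tfrac{|\b{\Sigma}_2|}{|\b{\Sigma}_1|}-\tfrac12(\b{x}-\b{\mu}_1)^\top\b{\Sigma}_1^{-1}(\b{x}-\b{\mu}_1)+\tfrac12(\b{x}-\b{\mu}_2)^\top\b{\Sigma}_2^{-1}(\b{x}-\b{\mu}_2)$. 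The log-determinant term is deterministic, so it passes through the expectation unchanged.

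The two remaining terms are expectations of quadratic forms, which I would evaluate using the identity $\mathbb{E}[\b{v}^\top\b{A}\b{v}]=\textbf{tr}(\b{A}\,\mathbb{E}[\b{v}\b{v}^\top])$ together with the cyclic property of the trace. For the first quadratic form, $\b{v}=\b{x}-\b{\mu}_1$ has $\mathbb{E}[\b{v}\b{v}^\top]=\b{\Sigma}_1$ under $p_1$, so $\mathbb{E}_{p_1}[(\b{x}-\b{\mu}_1)^\top\b{\Sigma}_1^{-1}(\b{x}-\b{\mu}_1)]=\textbf{tr}(\b{\Sigma}_1^{-1}\b{\Sigma}_1)=\textbf{tr}(\b{I}_p)=p$. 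For the second, I would decompose $\b{x}-\b{\mu}_2=(\b{x}-\b{\mu}_1)+(\b{\mu}_1-\b{\mu}_2)$ and expand the quadratic form into three pieces; the cross term vanishes because $\mathbb{E}_{p_1}[\b{x}-\b{\mu}_1]=\b{0}$, the centered piece gives $\textbf{tr}(\b{\Sigma}_2^{-1}\b{\Sigma}_1)$ by the same trace identity, and the constant piece gives $(\b{\mu}_1-\b{\mu}_2)^\top\b{\Sigma}_2^{-1}(\b{\mu}_1-\b{\mu}_2)$.

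Collecting these and factoring out $\tfrac12$ yields $\text{KL}(p_1\|p_2)=\tfrac12\big(\log\tfrac{|\b{\Sigma}_2|}{|\b{\Sigma}_1|}-p+\textbf{tr}(\b{\Sigma}_2^{-1}\b{\Sigma}_1)+(\b{\mu}_1-\b{\mu}_2)^\top\b{\Sigma}_2^{-1}(\b{\mu}_1-\b{\mu}_2)\big)$, and since a scalar quadratic form is invariant under $\b{\mu}_1-\b{\mu}_2\mapsto\b{\mu}_2-\b{\mu}_1$, this is exactly the claimed formula. I do not expect a genuine obstacle here; the only step requiring care is the bookkeeping in the mean-shift decomposition of the second quadratic form and making sure the cross term is correctly argued to be zero. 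As a sanity check one can specialize to $p=1$, $\b{\Sigma}_k=\sigma_k^2$, and recover Lemma \ref{lemma_KL_univariate_Gaussian}.
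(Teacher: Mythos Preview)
Your proposal is correct and is the standard derivation. The paper itself does not give a proof of this lemma but simply cites \citep[Section 9]{duchi2007derivations}; the argument there follows exactly the route you outline (expand the log-densities, take the expectation under $p_1$, and evaluate the two quadratic forms via the trace identity and the mean-shift decomposition), so your write-up effectively supplies the omitted details.
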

\begin{proof}
See {\citep[Section 9]{duchi2007derivations}} for proof.
\end{proof}

Consider the case in which we have:
\begin{align}
& \mathbb{P}(\b{z}_{i}\, |\, \b{x}_i, \b{\theta}_e) \sim \mathcal{N}(\b{\mu}_{z|x}, \b{\Sigma}_{z|x}), \\
& \mathbb{P}(\b{z}_{i}) \sim \mathcal{N}(\b{\mu}_{z}, \b{\Sigma}_{z}),
\end{align}
where $\b{z}_i \in \mathbb{R}^p$.
Note that the parameters $\b{\mu}_{z|x}$ and $\b{\Sigma}_{z|x}$ are trained in neural network while the parameters $\mathbb{P}(\b{z}_{i,j})$ can be set to $\b{\mu}_{z} = \b{0}$ and $\b{\Sigma}_{z} = \b{I}$ inspired by Eq. (\ref{equation_factor_analysis_prior_z}) in factor analysis. 
According to Lemma \ref{lemma_KL_multivariate_Gaussian}, the approximation of ELBO, i.e. Eq. (\ref{equation_VAE_simplification2}), can be simplified to:
\begin{align}
&\sum_{i=1}^b \mathcal{L}(q, \b{\theta}) \approx \sum_{i=1}^b \widetilde{\mathcal{L}}(q, \b{\theta}) \nonumber \\
&~~~~~~~~~~~~~ = - \sum_{i=1}^b \frac{1}{2} \Big( \log(\frac{|\b{\Sigma}_z|}{|\b{\Sigma}_{z|x}|}) - p + \textbf{tr}(\b{\Sigma}_z^{-1} \b{\Sigma}_{z|x}) \nonumber \\
&~~~~~~~~~~~~~~~~~ + (\b{\mu}_z - \b{\mu}_{z|x})^\top \b{\Sigma}_z^{-1} (\b{\mu}_z - \b{\mu}_{z|x}) \Big) \nonumber \\
&~~~~~~~~~~~~~~~~~ + \sum_{i=1}^b \frac{1}{\ell} \sum_{j=1}^{\ell} \log\big(\mathbb{P}(\b{x}_i\, |\, \b{z}_{i,j}, \b{\theta}_d)\big).
\end{align}

\subsubsection{Training Variational Autoencoder with Approximations}

We can train VAE with EM, where Monte Carlo approximations are applied to ELBO. The Eqs. (\ref{equation_VAE_E_step_gradient}) and (\ref{equation_VAE_M_step_gradient}) are replaced by the following equations:
\begin{align}
&\text{E-step:} ~~~~~ \b{\theta}_e^{(t)} := \b{\theta}_e^{(t-1)} + \eta_e \frac{\partial \sum_{i=1}^b \widetilde{\mathcal{L}}(\b{\theta}_e, \b{\theta}_d^{(t-1)}, \b{x}_i)}{\partial \b{\theta}_e}, \label{equation_VAE_E_step_gradient_approx} \\
&\text{M-step:} ~~~~~ \b{\theta}_d^{(t)} := \b{\theta}_d^{(t-1)} + \eta_d \frac{\partial \sum_{i=1}^b \widetilde{\mathcal{L}}(\b{\theta}_e^{(t)}, \b{\theta}_d, \b{x}_i)}{\partial \b{\theta}_d}, \label{equation_VAE_M_step_gradient_approx}
\end{align}
where the approximated ELBO was introduced in previous sections.

\subsubsection{Prior Regularization}

Some works regularize the ELBO, Eq. (\ref{equation_VAE_ELBO_KL_divergence}), with a penalty on the prior distribution $\mathbb{P}(\b{z}_i)$. Using this, we guide the learned distribution of latent space $\mathbb{P}(\b{z}_i\, |\, \b{x}_i, \b{\theta}_e)$ to have a specific prior distribution $\mathbb{P}(\b{z}_i)$. 
Some examples for prior regularization in VAE are geodesic priors \cite{hadjeres2017glsr} and optimal priors \cite{takahashi2019variational}.
Note that this regularization can inject domain knowledge to the latent space. It can also be useful for making the latent space more interpretable. 

\subsection{The Reparametrization Trick}

Sampling the $\ell$ samples for the latent variables, i.e. Eq. (\ref{equation_VAE_sampling}), blocks the gradient flow because computing the derivatives through $\mathbb{P}(\b{z}_i\, |\, \b{x}_i, \b{\theta}_e)$ by chain rule gives a high variance estimate of gradient. 
In order to overcome this problem, we use the reparameterization technique \cite{kingma2014auto,rezende2014stochastic,titsias2014doubly}. In this technique, instead of sampling $\b{z}_i \sim \mathbb{P}(\b{z}_i\, |\, \b{x}_i, \b{\theta}_e)$, we assume $\b{z}_i$ is a random variable but is a deterministic function of another random variable $\b{\epsilon}_i$ as follows:
\begin{align}
\b{z}_i = g(\b{\epsilon}_i, \b{x}_i, \b{\theta}_e),
\end{align}
where $\b{\epsilon}_i$ is a stochastic variable sampled from a distribution as:
\begin{align}
\b{\epsilon}_i \sim \mathbb{P}(\b{\epsilon}).
\end{align}
The Eqs. (\ref{equation_VAE_simplification1_ELBO_withExpectation}) and \ref{equation_VAE_simplification2_ELBO_withExpectation} both contain an expectation of a function $f(\b{z}_i)$. Using this technique, this expectation is replaced as:
\begin{align}
\mathbb{E}_{\sim \mathbb{P}(\b{z}_i\, |\, \b{x}_i, \b{\theta}_e)} [f(\b{z}_i)] \rightarrow \mathbb{E}_{\sim \mathbb{P}(\b{z}_i\, |\, \b{x}_i, \b{\theta}_e)} [f(g(\b{\epsilon}_i, \b{x}_i, \b{\theta}_e))].
\end{align}
Using the reparameterization technique, the encoder, which implemented $\mathbb{P}(\b{z}_i\, |\, \b{x}_i, \b{\theta}_e)$, is replaced by $g(\b{\epsilon}_i, \b{x}_i, \b{\theta}_e)$ where in the latent space between encoder and decoder, we have $\b{\epsilon}_i \sim \mathbb{P}(\b{\epsilon})$ and $\b{z}_i = g(\b{\epsilon}_i, \b{x}_i, \b{\theta}_e)$.

A simple example for the reparameterization technique is when $z_i$ and $\epsilon_i$ are univariate Gaussian variables:
\begin{align*}
&z_i \sim \mathcal{N}(\mu, \sigma^2), \\
&\epsilon_i \sim \mathcal{N}(0, 1), \\
&z_i = g(\epsilon_i) = \mu + \sigma \epsilon_i.
\end{align*}

For some more advanced reparameterization techniques, the reader can refer to \cite{figurnov2018implicit}.

\subsection{Training Variational Autoencoder with Backpropagation}

In practice, VAE is trained by backpropagation \cite{rezende2014stochastic} where the backpropagation algorithm \cite{rumelhart1986learning} is used for training the weights of network. Recall that in training VAE with EM, the encoder and decoder are trained separately using the E-step and the M-step of EM, respectively. However, in training VAE with backpropagation, the whole network is trained together and not in separate steps. Suppose the whole weights if VAE are denoted by $\b{\theta} := \{\b{\theta}_e, \b{\theta}_d\}$. Backpropagation trains VAE using the mini-batch stochastic gradient descent with the negative ELBO, $\sum_{i=1}^b -\widetilde{\mathcal{L}}(\b{\theta}, \b{x}_i)$, as the loss function:
\begin{align}
\b{\theta}^{(t)} := \b{\theta}^{(t-1)} - \eta \frac{\partial \sum_{i=1}^b -\widetilde{\mathcal{L}}(\b{\theta}, \b{x}_i)}{\partial \b{\theta}},
\end{align}
where $\eta$ is the learning rate. Note that we are minimizing here because neural networks usually minimize the loss function. 

\subsection{The Test Phase in Variational Autoencoder}

In the test phase, we feed the test data point $\b{x}_i$ to the encoder to determine the parameters of the conditional distribution of latent space, i.e., $\mathbb{P}(\b{z}_i\, |\, \b{x}_i, \b{\theta}_e)$. Then, from this distribution, we sample the latent variable $\b{z}_i$ from the latent space and generate the corresponding reconstructed data point $\b{x}_i$ by the decoder. As you see, VAE is a generative model which generates data points \cite{ng2002discriminative}.

\subsection{Other Notes and Other Variants of Variational Autoencoder}

There exist many improvements on VAE. Here, we briefly review some of these works. One of the problems of VAE is generating blurry images when data points are images. This blurry artifact may be because of several following reasons:
\begin{itemize}
\item sampling for the Monte Carlo approximations
\item lower bound approximation by ELBO
\item restrictions on the family of distributions where usually simple Gaussin distributions are used.
\end{itemize}
There are some other interpretations for the reason of this problem; for example, see \cite{zhao2017towards}.
This work also proposed a generalized ELBO. 
Note that generative adversarial networks \cite{goodfellow2014generative} usually generate clearer images; therefore, some works have combined variational and adversarial inferences \cite{mescheder2017adversarial} for using the advantages of both models. 

Variational discriminant analysis \cite{yu2020variational} has also been proposed for classification and discrimination of classes. 
Two other tutorials on VAE are \cite{doersch2016tutorial} and \cite{odaibo2019tutorial}. 
Some more recently published papers on VAE are nearly optimal VAE \cite{bai2020nearly}, deep VAE \cite{hou2017deep}, Hamiltonian VAE \cite{caterini2018hamiltonian}, and Nouveau VAE \cite{vahdat2020nvae} which is a hierarchical VAE. For image data and image and caption modeling, a fusion of VAE and convolutional neural network is also proposed \cite{pu2016variational}. 
The influential factors in VAE are also analyzed in the paper \cite{liu2020discovering}. 

\section{Conclusion}\label{section_conclusion}

This paper was a tutorial and survey on several dimensionality reduction and generative model which are tightly related. Factor analysis, probabilistic PCA, variational inference, and variational autoencoder are covered in this paper. All of these methods assume that every data point is generated from a latent variable or factor where some noise have also been applied on data in the data space. 

\section*{Acknowledgement}

The authors hugely thank the instructors of deep learning course at the Carnegie Mellon University (you can see their YouTube channel) whose lectures partly covered some materials mentioned in this tutorial paper. 

\appendix

\section{Proof for Lemma \ref{lemma_KL_univariate_Gaussian}}\label{section_appendix_A}


\begin{align*}
&\text{KL}(p_1\|p_2) = \int p_1(x) \log\big(\frac{p_1(x)}{p_2(x)}\big) dx \\
&=\int p_1(x) \log(p_1(x))\, dx - \int p_1(x) \log(p_2(x))\, dx.
\end{align*}
According to integration by parts, we have:
\begin{align*}
\int p_1(x) \log(p_1(x))\, dx = -\frac{1}{2} (1 + \log (2 \pi \sigma_1^2)).
\end{align*}
We also have:
\begin{align*}
&-\int p_1(x) \log(p_2(x))\, dx \\
&= -\int p_1(x) \log\Big( \frac{1}{\sqrt{2 \pi \sigma_2^2}} e^{-\frac{(x - \mu_2)^2}{2 \sigma_2^2}} \Big)\, dx \\
&= \frac{1}{2} \log( 2 \pi \sigma_2^2 ) \underbrace{\int p_1(x) dx}_{=1} -\int p_1(x) \log\big( e^{-\frac{(x - \mu_2)^2}{2 \sigma_2^2}} \big)\, dx \\
&= \frac{1}{2} \log( 2 \pi \sigma_2^2 ) +\int p_1(x) \frac{(x - \mu_2)^2}{2 \sigma_2^2}\, dx \\
&= \frac{1}{2} \log( 2 \pi \sigma_2^2 ) + \frac{1}{2 \sigma_2^2} \Big(\int p_1(x) x^2 dx \\
&~~~~~~~~~~~~~~~~~~~~~~~ - \int p_1(x) 2x\mu_2 dx + \int p_1(x) \mu_2^2 dx\Big) \\
&= \frac{1}{2} \log( 2 \pi \sigma_2^2 ) \\
&~~~~~~~~~~~~~~~~ + \frac{1}{2 \sigma_2^2} \Big(\mathbb{E}_{\sim p_1(x)} [x^2] - 2 \mu_2 \mathbb{E}_{\sim p_1(x)} [x] + \mu_2^2 \Big).
\end{align*}
We know that:
\begin{align*}
\mathbb{V}\text{ar}[x] = \mathbb{E}[x^2] - \mathbb{E}[x]^2 \implies \mathbb{E}[x^2] = \sigma_1^2 + \mu_1^2
\end{align*}
Hence:
\begin{align*}
&-\int p_1(x) \log(p_2(x))\, dx \\
&= \frac{1}{2} \log( 2 \pi \sigma_2^2 ) + \frac{1}{2 \sigma_2^2} \Big(\sigma_1^2 + \mu_1^2 - 2 \mu_2 \mu_1 + \mu_2^2 \Big) \\
&= \frac{1}{2} \log( 2 \pi \sigma_2^2 ) + \frac{1}{2 \sigma_2^2} \big( \sigma_1^2 + (\mu_1 - \mu_2) \big).
\end{align*}
Therefore, finally, we have:
\begin{align*}
\text{KL}(p_1\|p_2) &= -\frac{1}{2} (1 + \log (2 \pi \sigma_1^2))  \\
&+ \frac{1}{2} \log( 2 \pi \sigma_2^2 ) + \frac{1}{2 \sigma_2^2} \big( \sigma_1^2 + (\mu_1 - \mu_2) \big) \\
&= \log(\frac{\sigma_2}{\sigma_1}) + \frac{\sigma_1^2 + (\mu_1 - \mu_2)^2}{2 \sigma_2^2} - \frac{1}{2}.
\end{align*}
Q.E.D.




\bibliography{References}

\begin{thebibliography}{51}
\providecommand{\natexlab}[1]{#1}
\providecommand{\url}[1]{\texttt{#1}}
\expandafter\ifx\csname urlstyle\endcsname\relax
  \providecommand{\doi}[1]{doi: #1}\else
  \providecommand{\doi}{doi: \begingroup \urlstyle{rm}\Url}\fi

\bibitem[Bai et~al.(2020)Bai, Song, and Cheng]{bai2020nearly}
Bai, Jincheng, Song, Qifan, and Cheng, Guang.
\newblock Nearly optimal variational inference for high dimensional regression
  with shrinkage priors.
\newblock \emph{arXiv preprint arXiv:2010.12887}, 2020.

\bibitem[Bishop(1999)]{bishop1999bayesian}
Bishop, Christopher~M.
\newblock Bayesian {PCA}.
\newblock In \emph{Advances in neural information processing systems}, pp.\
  382--388, 1999.

\bibitem[Bishop(2006)]{bishop2006pattern}
Bishop, Christopher~M.
\newblock \emph{Pattern recognition and machine learning}.
\newblock Springer, 2006.

\bibitem[Bouchard \& Triggs(2004)Bouchard and Triggs]{bouchard2004tradeoff}
Bouchard, Guillaume and Triggs, Bill.
\newblock The tradeoff between generative and discriminative classifiers.
\newblock In \emph{16th IASC International Symposium on Computational
  Statistics}, 2004.

\bibitem[Caterini et~al.(2018)Caterini, Doucet, and
  Sejdinovic]{caterini2018hamiltonian}
Caterini, Anthony~L, Doucet, Arnaud, and Sejdinovic, Dino.
\newblock Hamiltonian variational auto-encoder.
\newblock \emph{Advances in Neural Information Processing Systems},
  31:\penalty0 8167--8177, 2018.

\bibitem[Cattell(1965)]{cattell1965biometrics}
Cattell, Raymond~B.
\newblock A biometrics invited paper. factor analysis: An introduction to
  essentials i. the purpose and underlying models.
\newblock \emph{Biometrics}, 21\penalty0 (1):\penalty0 190--215, 1965.

\bibitem[Child(1990)]{child1990essentials}
Child, Dennis.
\newblock \emph{The essentials of factor analysis}.
\newblock Cassell Educational, 1990.

\bibitem[Doersch(2016)]{doersch2016tutorial}
Doersch, Carl.
\newblock Tutorial on variational autoencoders.
\newblock \emph{arXiv preprint arXiv:1606.05908}, 2016.

\bibitem[Duchi(2007)]{duchi2007derivations}
Duchi, John.
\newblock Derivations for linear algebra and optimization.
\newblock Technical report, Berkeley, California, 2007.

\bibitem[Figurnov et~al.(2018)Figurnov, Mohamed, and
  Mnih]{figurnov2018implicit}
Figurnov, Mikhail, Mohamed, Shakir, and Mnih, Andriy.
\newblock Implicit reparameterization gradients.
\newblock \emph{Advances in Neural Information Processing Systems},
  31:\penalty0 441--452, 2018.

\bibitem[Fruchter(1954)]{fruchter1954introduction}
Fruchter, Benjamin.
\newblock \emph{Introduction to factor analysis}.
\newblock Van Nostrand, 1954.

\bibitem[Ghahramani \& Hinton(1996)Ghahramani and
  Hinton]{ghahramani1996algorithm}
Ghahramani, Zoubin and Hinton, Geoffrey~E.
\newblock The {EM} algorithm for mixtures of factor analyzers.
\newblock Technical report, Technical Report CRG-TR-96-1, University of
  Toronto, 1996.

\bibitem[Ghojogh \& Crowley(2019)Ghojogh and Crowley]{ghojogh2019unsupervised}
Ghojogh, Benyamin and Crowley, Mark.
\newblock Unsupervised and supervised principal component analysis: Tutorial.
\newblock \emph{arXiv preprint arXiv:1906.03148}, 2019.

\bibitem[Ghojogh et~al.(2019{\natexlab{a}})Ghojogh, Ghojogh, Crowley, and
  Karray]{ghojogh2019fitting}
Ghojogh, Benyamin, Ghojogh, Aydin, Crowley, Mark, and Karray, Fakhri.
\newblock Fitting a mixture distribution to data: tutorial.
\newblock \emph{arXiv preprint arXiv:1901.06708}, 2019{\natexlab{a}}.

\bibitem[Ghojogh et~al.(2019{\natexlab{b}})Ghojogh, Karray, and
  Crowley]{ghojogh2019eigenvalue}
Ghojogh, Benyamin, Karray, Fakhri, and Crowley, Mark.
\newblock Eigenvalue and generalized eigenvalue problems: Tutorial.
\newblock \emph{arXiv preprint arXiv:1903.11240}, 2019{\natexlab{b}}.

\bibitem[Ghojogh et~al.(2020{\natexlab{a}})Ghojogh, Ghodsi, Karray, and
  Crowley]{ghojogh2020stochastic}
Ghojogh, Benyamin, Ghodsi, Ali, Karray, Fakhri, and Crowley, Mark.
\newblock Stochastic neighbor embedding with {Gaussian} and {Student}-t
  distributions: Tutorial and survey.
\newblock \emph{arXiv preprint arXiv:2009.10301}, 2020{\natexlab{a}}.

\bibitem[Ghojogh et~al.(2020{\natexlab{b}})Ghojogh, Nekoei, Ghojogh, Karray,
  and Crowley]{ghojogh2020sampling}
Ghojogh, Benyamin, Nekoei, Hadi, Ghojogh, Aydin, Karray, Fakhri, and Crowley,
  Mark.
\newblock Sampling algorithms, from survey sampling to {Monte} {Carlo} methods:
  Tutorial and literature review.
\newblock \emph{arXiv preprint arXiv:2011.00901}, 2020{\natexlab{b}}.

\bibitem[Goodfellow et~al.(2014)Goodfellow, Pouget-Abadie, Mirza, Xu,
  Warde-Farley, Ozair, Courville, and Bengio]{goodfellow2014generative}
Goodfellow, Ian, Pouget-Abadie, Jean, Mirza, Mehdi, Xu, Bing, Warde-Farley,
  David, Ozair, Sherjil, Courville, Aaron, and Bengio, Yoshua.
\newblock Generative adversarial nets.
\newblock In \emph{Advances in neural information processing systems}, pp.\
  2672--2680, 2014.

\bibitem[Gu \& Shen(2020)Gu and Shen]{gu2020generalized}
Gu, Mengyang and Shen, Weining.
\newblock Generalized probabilistic principal component analysis of correlated
  data.
\newblock \emph{Journal of Machine Learning Research}, 21\penalty0
  (13):\penalty0 1--41, 2020.

\bibitem[Guan \& Dy(2009)Guan and Dy]{guan2009sparse}
Guan, Yue and Dy, Jennifer.
\newblock Sparse probabilistic principal component analysis.
\newblock In \emph{Artificial Intelligence and Statistics}, pp.\  185--192,
  2009.

\bibitem[Hadjeres et~al.(2017)Hadjeres, Nielsen, and Pachet]{hadjeres2017glsr}
Hadjeres, Ga{\"e}tan, Nielsen, Frank, and Pachet, Fran{\c{c}}ois.
\newblock {GLSR-VAE}: Geodesic latent space regularization for variational
  autoencoder architectures.
\newblock In \emph{2017 IEEE Symposium Series on Computational Intelligence},
  pp.\  1--7. IEEE, 2017.

\bibitem[Harman(1976)]{harman1976modern}
Harman, Harry~H.
\newblock \emph{Modern factor analysis}.
\newblock University of Chicago press, 1976.

\bibitem[Hauskrecht(2007)]{hauskrecht2007cs3750}
Hauskrecht, Milos.
\newblock {CS3750} lecture notes for probabilistic principal component analysis
  and the {E-M} algorithm.
\newblock Technical report, University of Pittsburgh, 2007.

\bibitem[Hershey \& Olsen(2007)Hershey and Olsen]{hershey2007approximating}
Hershey, John~R and Olsen, Peder~A.
\newblock Approximating the {Kullback Leibler} divergence between {Gaussian}
  mixture models.
\newblock In \emph{2007 IEEE International Conference on Acoustics, Speech and
  Signal Processing}, volume~4, pp.\  IV--317. IEEE, 2007.

\bibitem[Hou et~al.(2017)Hou, Shen, Sun, and Qiu]{hou2017deep}
Hou, Xianxu, Shen, Linlin, Sun, Ke, and Qiu, Guoping.
\newblock Deep feature consistent variational autoencoder.
\newblock In \emph{2017 IEEE Winter Conference on Applications of Computer
  Vision}, pp.\  1133--1141. IEEE, 2017.

\bibitem[Jain \& Kar(2017)Jain and Kar]{jain2017non}
Jain, Prateek and Kar, Purushottam.
\newblock Non-convex optimization for machine learning.
\newblock \emph{Foundations and Trends{\textregistered} in Machine Learning},
  10\penalty0 (3-4):\penalty0 142--336, 2017.

\bibitem[Jolliffe \& Cadima(2016)Jolliffe and Cadima]{jolliffe2016principal}
Jolliffe, Ian~T and Cadima, Jorge.
\newblock Principal component analysis: a review and recent developments.
\newblock \emph{Philosophical Transactions of the Royal Society A:
  Mathematical, Physical and Engineering Sciences}, 374\penalty0 (2065), 2016.

\bibitem[Kingma \& Welling(2014)Kingma and Welling]{kingma2014auto}
Kingma, Diederik~P and Welling, Max.
\newblock Auto-encoding variational {Bayes}.
\newblock In \emph{International Conference on Learning Representations}, 2014.

\bibitem[Kullback \& Leibler(1951)Kullback and
  Leibler]{kullback1951information}
Kullback, Solomon and Leibler, Richard~A.
\newblock On information and sufficiency.
\newblock \emph{The annals of mathematical statistics}, 22\penalty0
  (1):\penalty0 79--86, 1951.

\bibitem[Liu et~al.(2020)Liu, Liu, Zhao, Cao, Li, Meng, Meng, and
  Liu]{liu2020discovering}
Liu, Shiqi, Liu, Jingxin, Zhao, Qian, Cao, Xiangyong, Li, Huibin, Meng, Deyu,
  Meng, Hongying, and Liu, Sheng.
\newblock Discovering influential factors in variational autoencoders.
\newblock \emph{Pattern Recognition}, 100:\penalty0 107166, 2020.

\bibitem[Mattei et~al.(2016)Mattei, Bouveyron, and
  Latouche]{mattei2016globally}
Mattei, Pierre-Alexandre, Bouveyron, Charles, and Latouche, Pierre.
\newblock Globally sparse probabilistic pca.
\newblock In \emph{Artificial Intelligence and Statistics}, pp.\  976--984,
  2016.

\bibitem[Mescheder et~al.(2017)Mescheder, Nowozin, and
  Geiger]{mescheder2017adversarial}
Mescheder, Lars, Nowozin, Sebastian, and Geiger, Andreas.
\newblock Adversarial variational bayes: Unifying variational autoencoders and
  generative adversarial networks.
\newblock In \emph{International Conference on Machine Learning}, 2017.

\bibitem[Ng(2018)]{ng2018cs229}
Ng, Andrew.
\newblock {CS229} lecture notes for factor analysis.
\newblock Technical report, Stanford University, 2018.

\bibitem[Ng \& Jordan(2002)Ng and Jordan]{ng2002discriminative}
Ng, Andrew~Y and Jordan, Michael~I.
\newblock On discriminative vs. generative classifiers: A comparison of
  logistic regression and naive {Bayes}.
\newblock In \emph{Advances in neural information processing systems}, pp.\
  841--848, 2002.

\bibitem[Odaibo(2019)]{odaibo2019tutorial}
Odaibo, Stephen.
\newblock Tutorial: Deriving the standard variational autoencoder ({VAE}) loss
  function.
\newblock \emph{arXiv preprint arXiv:1907.08956}, 2019.

\bibitem[Paola~Garcia(2018)]{paola2018lecture}
Paola~Garcia, Leibny.
\newblock Lecture notes for factor analysis.
\newblock Technical report, Carnegie Mellon University, 2018.

\bibitem[Pu et~al.(2016)Pu, Gan, Henao, Yuan, Li, Stevens, and
  Carin]{pu2016variational}
Pu, Yunchen, Gan, Zhe, Henao, Ricardo, Yuan, Xin, Li, Chunyuan, Stevens,
  Andrew, and Carin, Lawrence.
\newblock Variational autoencoder for deep learning of images, labels and
  captions.
\newblock In \emph{Advances in neural information processing systems}, pp.\
  2352--2360, 2016.

\bibitem[Rezende et~al.(2014)Rezende, Mohamed, and
  Wierstra]{rezende2014stochastic}
Rezende, Danilo~Jimenez, Mohamed, Shakir, and Wierstra, Daan.
\newblock Stochastic backpropagation and approximate inference in deep
  generative models.
\newblock In \emph{International Conference on Machine Learning}, 2014.

\bibitem[Roweis(1997)]{roweis1997algorithms}
Roweis, Sam.
\newblock {EM} algorithms for {PCA} and {SPCA}.
\newblock \emph{Advances in neural information processing systems},
  10:\penalty0 626--632, 1997.

\bibitem[Rumelhart et~al.(1986)Rumelhart, Hinton, and
  Williams]{rumelhart1986learning}
Rumelhart, David~E, Hinton, Geoffrey~E, and Williams, Ronald~J.
\newblock Learning representations by back-propagating errors.
\newblock \emph{Nature}, 323\penalty0 (6088):\penalty0 533--536, 1986.

\bibitem[Sminchisescu \& Jepson(2004)Sminchisescu and
  Jepson]{sminchisescu2004generative}
Sminchisescu, Cristian and Jepson, Allan.
\newblock Generative modeling for continuous non-linearly embedded visual
  inference.
\newblock In \emph{Proceedings of the twenty-first international conference on
  Machine learning}, pp.\ ~96, 2004.

\bibitem[Takahashi et~al.(2019)Takahashi, Iwata, Yamanaka, Yamada, and
  Yagi]{takahashi2019variational}
Takahashi, Hiroshi, Iwata, Tomoharu, Yamanaka, Yuki, Yamada, Masanori, and
  Yagi, Satoshi.
\newblock Variational autoencoder with implicit optimal priors.
\newblock In \emph{Proceedings of the AAAI Conference on Artificial
  Intelligence}, volume~33, pp.\  5066--5073, 2019.

\bibitem[Tipping \& Bishop(1999{\natexlab{a}})Tipping and
  Bishop]{tipping1999mixtures}
Tipping, Michael~E and Bishop, Christopher~M.
\newblock Mixtures of probabilistic principal component analyzers.
\newblock \emph{Neural computation}, 11\penalty0 (2):\penalty0 443--482,
  1999{\natexlab{a}}.

\bibitem[Tipping \& Bishop(1999{\natexlab{b}})Tipping and
  Bishop]{tipping1999probabilistic}
Tipping, Michael~E and Bishop, Christopher~M.
\newblock Probabilistic principal component analysis.
\newblock \emph{Journal of the Royal Statistical Society: Series B (Statistical
  Methodology)}, 61\penalty0 (3):\penalty0 611--622, 1999{\natexlab{b}}.

\bibitem[Titsias \& L{\'a}zaro-Gredilla(2014)Titsias and
  L{\'a}zaro-Gredilla]{titsias2014doubly}
Titsias, Michalis and L{\'a}zaro-Gredilla, Miguel.
\newblock Doubly stochastic variational {Bayes} for non-conjugate inference.
\newblock In \emph{International conference on machine learning}, pp.\
  1971--1979, 2014.

\bibitem[Vahdat \& Kautz(2020)Vahdat and Kautz]{vahdat2020nvae}
Vahdat, Arash and Kautz, Jan.
\newblock {NVAE}: A deep hierarchical variational autoencoder.
\newblock \emph{Advances in Neural Information Processing Systems}, 33, 2020.

\bibitem[Walker et~al.(2016)Walker, Doersch, Gupta, and
  Hebert]{walker2016uncertain}
Walker, Jacob, Doersch, Carl, Gupta, Abhinav, and Hebert, Martial.
\newblock An uncertain future: Forecasting from static images using variational
  autoencoders.
\newblock In \emph{European Conference on Computer Vision}, pp.\  835--851.
  Springer, 2016.

\bibitem[Yu et~al.(2006)Yu, Yu, Tresp, Kriegel, and Wu]{yu2006supervised}
Yu, Shipeng, Yu, Kai, Tresp, Volker, Kriegel, Hans-Peter, and Wu, Mingrui.
\newblock Supervised probabilistic principal component analysis.
\newblock In \emph{Proceedings of the 12th ACM SIGKDD international conference
  on Knowledge discovery and data mining}, pp.\  464--473, 2006.

\bibitem[Yu et~al.(2020)Yu, Azizi, and Ormerod]{yu2020variational}
Yu, Weichang, Azizi, Lamiae, and Ormerod, John~T.
\newblock Variational nonparametric discriminant analysis.
\newblock \emph{Computational Statistics \& Data Analysis}, 142:\penalty0
  106817, 2020.

\bibitem[Zhao \& Jiang(2006)Zhao and Jiang]{zhao2006probabilistic}
Zhao, J and Jiang, Q.
\newblock Probabilistic {PCA} for t distributions.
\newblock \emph{Neurocomputing}, 69\penalty0 (16-18):\penalty0 2217--2226,
  2006.

\bibitem[Zhao et~al.(2017)Zhao, Song, and Ermon]{zhao2017towards}
Zhao, Shengjia, Song, Jiaming, and Ermon, Stefano.
\newblock Towards deeper understanding of variational autoencoding models.
\newblock \emph{arXiv preprint arXiv:1702.08658}, 2017.

\end{thebibliography}
\bibliographystyle{icml2016}

\end{document}